\documentclass[12pt]{article}

\usepackage{amsmath}
\usepackage{amsthm}
\usepackage{url}
\usepackage{amsfonts}
\usepackage{graphicx}
\usepackage{epstopdf}
\usepackage{algorithmic}
\usepackage{hyperref}
\usepackage{delimset}
\usepackage[capitalize,nameinlink]{cleveref}
\usepackage[caption=false]{subfig}
\ifpdf
  \DeclareGraphicsExtensions{.eps,.pdf,.png,.jpg}
\else
  \DeclareGraphicsExtensions{.eps}
\fi

\hypersetup{
  colorlinks=false,
  linkcolor=blue,
  citecolor=blue,
  urlcolor=blue,
  pdfborder={0 0 0}
}

\usepackage[
  left=1in,
  right=1in,
  top=1in,
  bottom=1in
]{geometry}

\usepackage{enumitem}
\setlist[enumerate]{leftmargin=.5in}
\setlist[itemize]{leftmargin=.5in}

\usepackage{amsopn}

\newcommand{\bc}{\boldsymbol{c}}

\newcommand{\be}{\boldsymbol{e}}

\newcommand{\bp}{\boldsymbol{p}}

\newcommand{\bv}{\boldsymbol{v}}

\newcommand{\bA}{\boldsymbol{A}}
\newcommand{\bB}{\boldsymbol{B}}
\newcommand{\bC}{\boldsymbol{C}}
\newcommand{\bD}{\boldsymbol{D}}
\newcommand{\bE}{\boldsymbol{E}}
\newcommand{\bF}{\boldsymbol{F}}

\newcommand{\bH}{\boldsymbol{H}}
\newcommand{\bI}{\boldsymbol{I}}

\newcommand{\bL}{\boldsymbol{L}}

\newcommand{\bP}{\boldsymbol{P}}
\newcommand{\bQ}{\boldsymbol{Q}}

\newcommand{\bU}{\boldsymbol{U}}
\newcommand{\bV}{\boldsymbol{V}}

\newcommand{\bX}{\boldsymbol{X}}

\newcommand{\bZ}{\boldsymbol{Z}}

\newcommand{\bLambda}{\boldsymbol{\Lambda}}

\newcommand{\bPhi}{\boldsymbol{\Phi}}

\newcommand{\bzero}{\boldsymbol{0}}

\DeclareMathOperator{\vect}{vec}

\DeclareMathOperator{\vol}{Vol}
\DeclareMathOperator{\diam}{diam}

\DeclareMathOperator*{\argmin}{arg\,min}

\newtheorem{theorem}{Theorem}
\newtheorem{corollary}{Corollary}
\newtheorem{lemma}{Lemma}
\newtheorem{definition}{Definition}
\newtheorem{proposition}{Proposition}
\newtheorem{assumption}{Assumption}

\crefformat{equation}{\textup{#2(#1)#3}}
\crefrangeformat{equation}{\textup{#3(#1)#4--#5(#2)#6}}
\crefmultiformat{equation}{\textup{#2(#1)#3}}{ and \textup{#2(#1)#3}}
{, \textup{#2(#1)#3}}{, and \textup{#2(#1)#3}}
\crefrangemultiformat{equation}{\textup{#3(#1)#4--#5(#2)#6}}%
{ and \textup{#3(#1)#4--#5(#2)#6}}{, \textup{#3(#1)#4--#5(#2)#6}}{, and \textup{#3(#1)#4--#5(#2)#6}}
\Crefformat{equation}{#2Equation~\textup{(#1)}#3}
\Crefrangeformat{equation}{Equations~\textup{#3(#1)#4--#5(#2)#6}}
\Crefmultiformat{equation}{Equations~\textup{#2(#1)#3}}{ and \textup{#2(#1)#3}}
{, \textup{#2(#1)#3}}{, and \textup{#2(#1)#3}}
\Crefrangemultiformat{equation}{Equations~\textup{#3(#1)#4--#5(#2)#6}}%
{ and \textup{#3(#1)#4--#5(#2)#6}}{, \textup{#3(#1)#4--#5(#2)#6}}{, and \textup{#3(#1)#4--#5(#2)#6}}

\Crefname{figure}{Figure}{Figures}

\begin{document}

\title{Affinity Graph Connectivity in Convex Clustering
}

\author{Sam Rosen\thanks{Department of Statistical Science, Duke University }
\and Jason Xu \thanks{Department of Biostatistics, University of California Los Angeles}
}

\maketitle

\begin{abstract}
    We generalize finite-sample bounds for convex clustering to the setting where affinity weights appearing in the objective correspond to a general connected graph. These bounds and their analysis lead to a better understanding of clustering behavior under various implied connectivity structures behind the data and to new rates of convergence for centroid recovery. The new theoretical framework is based on random walks, which allow application of concentration inequalities related to random graph models, and formalizes the relationship between the clustering performance and the connectivity of the graph structures. Through the form of the bound and empirical results, we argue proper tuning of hyperparameters to convex clustering problems should also include tuning of input affinity weights.
\end{abstract}

\section{Introduction}\label{sec:intro}

Convex clustering, also referred to as clusterpath \cite{hockingClusterpathAlgorithmClustering2011}, relaxes single linkage hierarchical clustering into a convex optimization problem
\begin{equation}
    \hat \bU = \argmin_{\bU \in \mathbb R^{n\times p}} \gamma \sum_{i < j}^n \sqrt{\Phi_{ij}} \|\bU_{i \cdot} - \bU_{j \cdot}\|_2 + \frac{1}{2} \|\bU - \bX\|^2_F, \label{eqn:clusterpath}
\end{equation}
where $\bX \in \mathbb R^{n \times p}$ is a data matrix of $n$ samples and $p$ features and $\|\cdot\|_F$ is the Frobenius norm. The effect of the regularization term are determined by affinities $\Phi_{ij} > 0$, which encourage $\hat \bU_{i\cdot}$ and $\hat \bU_{j\cdot}$ to be close. Hence, the regularization term in \cref{eqn:clusterpath} causes solutions to have a small number of unique rows where $\hat \bU_{i\cdot}$ is interpreted as the cluster centroid of $\bX_{i\cdot}$. If $\hat \bU_{i\cdot} = \hat \bU_{j\cdot}$, then samples $i$ and $j$ are fit to the same cluster. As $\gamma$ increases, the regularization term encourages $\hat \bU$ to coalesce into fewer unique centroids, and eventually leads to a single unique row in $\hat \bU$ equal to the grand mean of $\bX$ \cite{tanStatisticalPropertiesConvex2015}. 

Convex clustering has many appealing properties as well as practical merits. The solution to \cref{eqn:clusterpath} is continuous with respect to $\bX$, $\bPhi$, and $\gamma$, giving stability in the produced centroids and a continuous solution path over $\gamma$ \cite{chiWhyHowConvex2025}. Furthermore, the convexity of the objective function gives a single global optima that is not sensitive to initialization and can be optimized in a variety of manners, including MM algorithms \cite{chiSplittingMethodsConvex2015, landeros2023mm, touwConvexClusteringMM2023} and other scalable variants \cite{panahiClusteringSumNorms2017, sunConvexClusteringModel2021, weylandtDynamicVisualizationFast2020, zhouScalableAlgorithmsConvex2021}. Under any method, a sparse choice of $\bPhi$ can also accelerate optimization  by reducing the computation and storage requirements of the penalty term. Along with these desirable properties, convex clustering is highly extensible by modifying \cref{eqn:clusterpath}, expanding the idea to settings including binary or multi-view data \cite{choiConvexClusteringBinary2019, wangIntegrativeGeneralizedConvex2021}, graph-structured data \cite{donnat2019convex}, supervised learning \cite{wangSupervisedConvexClustering2023}, and robust loss functions \cite{sunResistantConvexClustering2025}. 

In all of these cases, the user needs to specify some choice of $\bPhi$, and the quality of solutions to \cref{eqn:clusterpath} depends heavily on this choice. Typically, the user chooses $\Phi_{ij} \geq 0$ \textit{a priori}, which act as weights on the relative importance of the distance between $\hat \bU_{i\cdot}$ and $\hat \bU_{j\cdot}$. Intuitively, since every $\Phi_{ij} > 0$ encourages samples $i$ and $j$ to share a centroid, $\bPhi$ ideally has many positive entries for pairs of points sharing a true centroid, and small or zero entries for pairs that do not. Following this intuition, many papers \cite{chakrabortyBiconvexClustering2023, chiProvableConvexCoclustering2020, chiSplittingMethodsConvex2015, wangSparseConvexClustering2018} recommend setting affinities to be sparse from a combination of Gaussian kernel and $k$-nearest neighbors calculations
\begin{displaymath}
    \Phi_{ij} = \begin{cases}
        \exp\brk1{-\tau \|\bX_{i\cdot} - \bX_{j\cdot}\|^2_2} & \bX_{i\cdot} \text{ and } \bX_{j\cdot} \text{ are $k$-nearest neighbors}, \\
        0 & \text{otherwise}.
    \end{cases}
\end{displaymath}
These studies have noted that this heuristic performs well in practice, but rigorous support for this observation is lacking in the literature. It is not clear  how densely $\bPhi$ should be chosen in order to recover clusters well. For instance, existing work has noted that under uniform weights $\Phi_{ij} = 1$ for $i \neq j$, convex clustering fails in simple cases such as data supported on two disjoint balls \cite{JMLR:v25:21-0495}, or with a single corrupted data point \cite{sunResistantConvexClustering2025}. When allowing $\bPhi$ to be non-uniform, convex clustering is able to maintain practical guarantees. \cite{sunConvexClusteringModel2021} showed that cluster identification can be recovered if $\bPhi$ satisfies a variety of inequalities related to separation of the data, along with $\Phi_{ij} > 0$ for any $i,j$ sharing a centroid. Despite this, most of the statistical guarantees apply only to the case of uniform affinities, first explored in \cite{tanStatisticalPropertiesConvex2015}, effecting a disconnect between theory and practice. A recent exception is a theoretical study by \cite{dunlapLocalVersionsSumofNorms2022} that establishes under quite technical conditions on the support of $\bX$  that if $\Phi_{ij} = \tau^{p+1} \exp(-\tau \|\bX_{i\cdot} - \bX_{j\cdot}\|_2)$ then the mean squared error of solutions to \cref{eqn:clusterpath} converges to zero in probability. In addition, if $\Phi_{ij}$ is set to zero when $\|\bX_{i\cdot} - \bX_{j\cdot}\|_2 > r$, provided $r$ satisfies regularity conditions, convex clustering remains consistent. In particular, this result implies clustering consistency for two construction methods for $\bPhi$ when the support of $\bX$ is a disjoint union of ``effectively star-shaped" sets.

In this article, we propose a simple new perspective that leads to bounds on the mean squared error of convex clustering solutions under general choices of $\bPhi$. Under weaker assumptions than prior  work, 
we extend the finite-sample bounds in \cite{tanStatisticalPropertiesConvex2015} and identify the key properties of $\bPhi$ which result in high-quality solutions. The ideas employed in \cite{tanStatisticalPropertiesConvex2015} have proven useful to derive bounds for variants of convex clustering  under mild regularity conditions \cite{chakrabortyBiconvexClustering2023,chiProvableConvexCoclustering2020, linLowRank, wangSparseConvexClustering2018}. However, the techniques in \cite{tanStatisticalPropertiesConvex2015} lack flexibility in the affinities $\bPhi$, allowing only $\Phi_{ij} = 1$ for $i \neq j$. In turn, the bound only implies consistency of convex clustering with specific conditions on the data-generating mechanism: cluster centers differing by a fixed set of features, yet $p$ growing with $n$ and $\sqrt{\frac{n \log( n^2 p)}{p}} = o(1)$. For general $\bPhi$, some finite-sample bounds in the specific case of convex clustering on low-rank data matrices exist \cite{linLowRank}, and we standardize these results for objective functions of the form \cref{eqn:clusterpath}. Finally, our results contribute to the broader statistical theory behind convex clustering in establishing new prediction consistency bounds. The proof technique is generalizable and leverages a novel connection to commute times of a random walk on graphs formed by the affinities. These findings are supported by our empirical study, which demonstrates that the properties of this random walk are related to solution quality in practice.

\section{Convex Clustering Setup and Basic Bounds} \label{section:theory}

To facilitate our analysis, here we define the data-generating mechanism and framework for affinity weights. We begin by imposing a minimal set of assumptions for a notion of consistency to be well-defined: there exists a fixed number of unique centers from which the observations (rows) are sampled under any sub-Gaussian distribution. This can be interpreted as a generic mixture model without further parametric assumptions on its components.

\begin{assumption}\label{assumption:data-generation}
    The data $\bX \in \mathbb R^{n\times p}$ is generated from $n$ independent observations of a center added to sub-Gaussian noise. Specifically, $\bX_{i\cdot} = \bU_{i\cdot} + \bE_{i\cdot}$ where $\bU_{i\cdot}$ is selected from $\mathcal C = (\bc_1,\ldots,\bc_m)$ with membership probability $\bp$, and $\bE_{i\cdot}$ is mean-zero with sub-Gaussian norm $\sigma_p$, i.e.
    \begin{displaymath}
        \sigma_p = \|\bE_{i\cdot}\|_{\psi_2} = \sup_{\|\bv\|_2 = 1} \inf \{K > 0 \colon  \mathbb E[\exp((\bv^\top \bE_{i\cdot})^2 / K^2)] \leq 2 \}. \quad (\text{Definition 3.4.1 of \cite{vershyninHighDimensionalProbabilityIntroduction2026}})
    \end{displaymath}
\end{assumption}
This assumption will allow us to discuss clustering consistency in the sense of the mean squared error, $\frac{1}{np} \|\hat \bU - \bU\|_F^2$, converging in probability to zero.

Instead of viewing each affinity  $\Phi_{ij}$ individually, we consider them holistically as a graph $G$. For additional clarity, we define the affinity graph and some derived quantities used throughout this work.
\begin{definition}[Affinity Graph]\label{def:affinity-graph}
    For a data matrix $\bX \in \mathbb R^{n\times p}$, a valid affinity graph, $G = (V, \mathcal E)$, is undirected, connected, and has no negative edge weights or self-loops. Each valid affinity graph corresponds to a symmetric adjacency matrix, $\bPhi \in \mathbb R^{n\times n}$, with nonnegative entries and a zero diagonal. Each node $i \in V$ corresponds to sample $\bX_{i\cdot}$ and $(i, j) \in \mathcal E \iff \Phi_{ij} > 0$. 
\end{definition}
We further define the following quantities for affinity graphs:
    \begin{itemize}
        \item the degree matrix $\bD \in \mathbb R^{n\times n}$, a diagonal matrix where $D_{ii}$ is the degree of node $i$;
        \item an oriented edge-incidence matrix $\bF \in \mathbb R^{n \times |\mathcal E|}$ where each column has two nonzero values corresponding to the two nodes connected by that edge, e.g. $\bF_{j,\mathcal E(j, k)} = -\bF_{k, \mathcal E(j,k)} = \sqrt{\Phi_{jk}}$, where $\mathcal E(j, k)$ is the index of the edge between nodes $j$ and $k$ (without loss of generality, assume $j < k$);
        \item and the Laplacian, $\bL = \bF \bF^\top = \bD - \bPhi$.
    \end{itemize}
We note that imposing connectivity is a mild condition in practice, as multiple disconnected components suggests an immediate partition of the samples. The user can then simply consider further clustering on each component individually, where our contributions apply. Connectivity is immediately satisfied under a variety of graph constructions. For example, a complete graph is trivially connected almost surely, and under commonly used embeddings such as $\epsilon$-radius or $k$-nearest neighbor graphs, conditions which guarantee that $G$ will be connected for large enough datasets are well-studied \cite{maierClusterIdentificationNearestNeighbor2007}.

Using the definition of $\bF$ from \cref{def:affinity-graph}, we can write 
\begin{displaymath}
    \operatorname{col}^\top_{\mathcal E(i, j)}(\bF) \bU = \sqrt{\Phi_{ij}} (\bU_{i\cdot} - \bU_{j\cdot}),
\end{displaymath} allowing us to 
rewrite \cref{eqn:clusterpath} as
\begin{displaymath}
     \min_{\bU \in \mathbb R^{n\times p}} \gamma \sum_{(i, j) \in \mathcal E} \|\bF_{\mathcal E(i, j)\cdot }^\top \bU\|_2 + \frac{1}{2} \|\bU - \bX\|^2_F.
\end{displaymath}
This reformulation is key to analyzing statistical properties of \cref{eqn:clusterpath}. As an example of convex clustering and an affinity graph, \cref{fig:knn-graph} shows 60 points in $\mathbb R^2$ generated from a Gaussian mixture model with three components. Each edge drawn represents some $\Phi_{ij} = 1$ derived from the 4-nearest neighbors of the points. \cref{fig:knn-graph-solution} shows the solution path for a range of $\gamma$, with points colored by the estimated cluster labels for four different clusters when $\gamma \approx 7$. Using a $k$-nearest neighbor graph leads to a convex clustering of the data that recovers many of the underlying labels (an adjusted Rand index of 0.58). Other choices of $\bPhi$ can further improve clustering performance. 

\begin{figure}[htbp]
    \centering
    \subfloat[Input $\bPhi$ and $\bX$]{\label{fig:knn-graph}\includegraphics[width=0.48\textwidth]{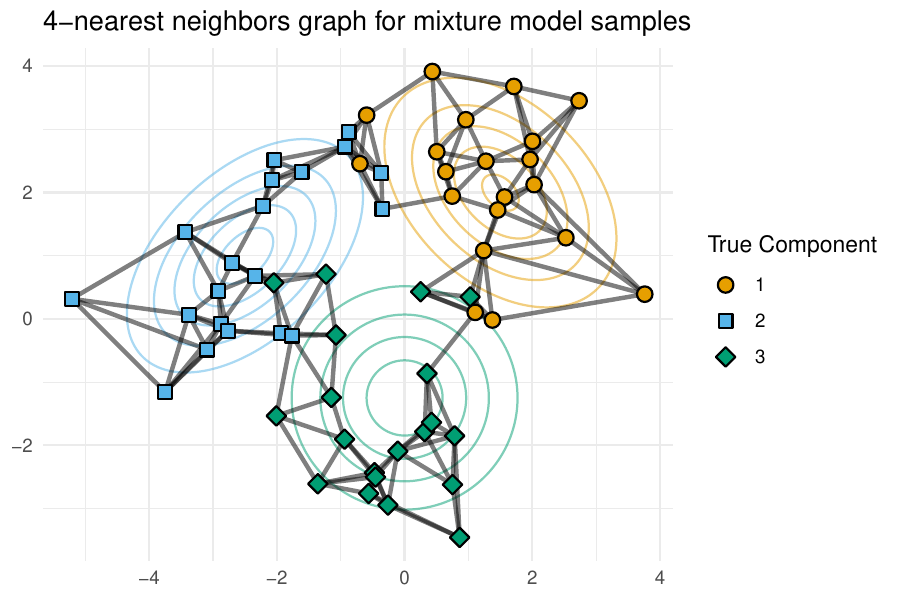}}
    \subfloat[Output solution path over $\gamma$]{\label{fig:knn-graph-solution}\includegraphics[width=0.48\textwidth]{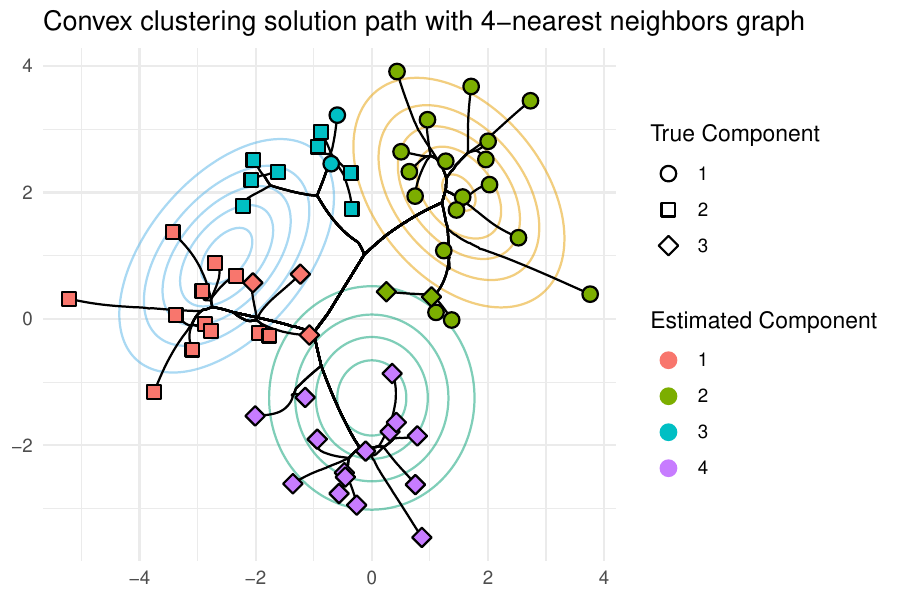}}
    \caption{(a) Points generated from one of three bivariate normals, colored by true cluster, with mixture component density contours shown in the background. Points are connected by being $4$-nearest neighbors. (b) Convex clustering solution path with points colored by estimated component membership at $\gamma \approx 7$, and shaped by true component.}
    \label{fig:knn-graph-both}
\end{figure}

To establish a theoretical framework relating $\bPhi$ to the clustering solution quality, we start with a general version of Lemma 7 from \cite{tanStatisticalPropertiesConvex2015}, describing the mean squared error of $\hat \bU$ with a term strictly dependent on noise and the other on the construction of $\bPhi$. This result applies as long as the underlying graph is connected almost surely, regardless if $\bPhi$ is derived from the data, or fixed beforehand. Similar to \cite{linLowRank}, a version of this theorem could be derived without connectivity almost surely, but we deem this case unhelpful for understanding the dynamics of \cref{eqn:clusterpath} and $\bPhi$, since it reduces to disjoint convex clustering problems.

\begin{theorem} \label{thm:data-dependent-standard} Let $\bX = \bU + \bE \in \mathbb R^{n \times p}$ be generated according to \cref{assumption:data-generation}. Let $\bPhi \in \mathbb R^{n \times n}$ be a symmetric adjacency matrix, corresponding to a valid affinity graph as defined in \cref{def:affinity-graph}, almost surely. If $\hat \bU$ is the global minimum from \cref{eqn:clusterpath} with $\gamma \geq \sqrt{p}\|\bF^\dagger \bE\|_{\max}$ then
    \begin{equation}
        \frac{1}{2np} \|\bU - \hat \bU\|_F^2 \leq c_1\sigma_p^2\brk3{\frac{1}{n} + {\frac{\log(np)}{n p}}} + \frac{2 \gamma}{np} \sum_{(i, j) \in \mathcal E} \sqrt{\Phi_{ij}} \|\bU_{i\cdot} - \bU_{j\cdot} \|_2, \label{eqn:thm-bound}
    \end{equation}
    with probability at least $1 - \frac{1}{np}$ where $c_1 > 0$ is some constant and $\bF^\dagger$ is the Moore--Penrose pseudoinverse of $\bF$.
\end{theorem}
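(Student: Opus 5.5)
We follow the basic-inequality argument behind Lemma 7 of \cite{tanStatisticalPropertiesConvex2015}, but split the noise along the range of $\bF$ instead of using the complete-graph structure.

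\textbf{Basic inequality.} Since $\hat\bU$ minimizes the objective, it does at least as well as the true $\bU$. Substituting $\bX = \bU + \bE$ and expanding the squares gives
\begin{displaymath}
\tfrac12\|\hat\bU-\bU\|_F^2 \le \langle \bE, \hat\bU-\bU\rangle + \gamma\sum_{(i,j)\in\mathcal E}\brk1{\|\bF^\top_{\mathcal E(i,j)\cdot}\bU\|_2 - \|\bF^\top_{\mathcal E(i,j)\cdot}\hat\bU\|_2}.
\end{displaymath}

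\textbf{Splitting the noise.} Set $\bP = \bF\bF^\dagger$, the orthogonal projector onto $\operatorname{col}(\bF)$. Because $G$ is connected almost surely, $\ker \bF^\top = \operatorname{span}(\mathbf 1)$. Hence $\bI - \bP = \frac1n \mathbf 1\mathbf 1^\top$, the projector onto constants. Write
\begin{displaymath}
\langle \bE,\bDelta\rangle = \langle (\bI-\bP)\bE,\bDelta\rangle + \langle (\bF^\dagger)^\top\bF^\top\bE, \bDelta\rangle, \qquad \bDelta = \hat\bU-\bU .
\end{displaymath}
The second term equals $\langle \bF^\dagger \bE, \bF^\top\bDelta\rangle$ up to transposition conventions, since $(\bF^\dagger)^\top\bF^\top = \bP$.

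\textbf{Controlling the range part.} Bound this row by row over the edges with Cauchy--Schwarz: each row of $\bF^\dagger\bE$ has $\ell_2$ norm at most $\sqrt p\,\|\bF^\dagger\bE\|_{\max}$. This gives
\begin{displaymath}
|\langle \bF^\dagger\bE,\bF^\top\bDelta\rangle| \le \sqrt p\,\|\bF^\dagger\bE\|_{\max}\sum_{e}\|\bF_e^\top\bDelta\|_2 \le \gamma\sum_e\brk1{\|\bF_e^\top\hat\bU\|_2+\|\bF_e^\top\bU\|_2},
\end{displaymath}
using the hypothesis on $\gamma$ and the triangle inequality. The $\hat\bU$ terms cancel against the penalty. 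What remains is $2\gamma\sum_{(i,j)}\sqrt{\Phi_{ij}}\|\bU_{i\cdot}-\bU_{j\cdot}\|_2$, which after dividing by $np$ is exactly the second term of \cref{eqn:thm-bound}.

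\textbf{Controlling the mean part.} This is the main obstacle. We have
\begin{displaymath}
\langle(\bI-\bP)\bE,\bDelta\rangle \le \|(\bI-\bP)\bE\|_F\|\bDelta\|_F \le \|(\bI-\bP)\bE\|_F^2 + \tfrac14\|\bDelta\|_F^2 .
\end{displaymath}
Absorbing $\tfrac14\|\bDelta\|_F^2$ into the left side changes only the constants: the left side becomes $\tfrac14\|\bDelta\|_F^2$ and the right side is multiplied by $4$, so the factor $2\gamma$ becomes $8\gamma$. To land on the exact constants $\tfrac12$ and $2\gamma$, we would instead use Hölder on this term, or argue as in Tan--Witten that the sum of rows of $\hat\bU$ equals the sum of rows of $\bX$ by the optimality condition. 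Then $(\bI-\bP)\bDelta = \frac1n\mathbf 1\mathbf 1^\top\bE$, and the term equals $\|(\bI-\bP)\bE\|_F^2$ exactly, with no absorption needed.

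Now $\|(\bI-\bP)\bE\|_F^2 = \|\frac{1}{\sqrt n}\mathbf 1^\top\bE\|_2^2$, the squared norm of $\sqrt n$ times the sample mean noise. The vector $\frac1{\sqrt n}\mathbf 1^\top\bE\in\mathbb R^p$ has independent-row averaging, so it is sub-Gaussian with norm $\lesssim\sigma_p$. By the sub-Gaussian norm concentration in \cite{vershyninHighDimensionalProbabilityIntroduction2026}, with a union bound over coordinates, it is at most $c\sigma_p^2(p+\log(np))$ with probability $1-\frac1{np}$. Dividing by $np$ gives $c_1\sigma_p^2(\frac1n+\frac{\log(np)}{np})$.

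The delicate point is that $\bPhi$ may depend on $\bX$. This is harmless here because the only random quantity we concentrate is $\mathbf 1^\top\bE$, which does not involve $\bPhi$. The $\bPhi$-dependent piece $\|\bF^\dagger\bE\|_{\max}$ is handled deterministically through the assumption on $\gamma$. Combining the three bounds yields \cref{eqn:thm-bound}.
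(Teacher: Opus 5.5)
Your proof follows the paper's. The steps match: the same basic inequality, and the same split of $\mathbb R^n$ into $\ker\bF^\top=\operatorname{span}(\mathbf 1_n)$ and its orthogonal complement. The first-order condition in the unpenalized constant direction is also the same: the paper's $\hat\bA=\bV_\alpha^\top\bX$ is exactly your $\mathbf 1_n^\top\hat\bU=\mathbf 1_n^\top\bX$. So is the row-wise Cauchy--Schwarz against $\bF^\dagger\bE$. Writing the range part directly as $\langle\bF^\dagger\bE,\bF^\top(\hat\bU-\bU)\rangle$, via $\bF\bF^\dagger=(\bF^\dagger)^\top\bF^\top$, is a tidier substitute for the paper's SVD and $\bZ^\dagger\bZ=\bI_{n-1}$ bookkeeping. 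Your remark that $\bPhi$ depending on $\bX$ is harmless is also the paper's point: the projector onto constants does not depend on $G$ once $G$ is connected.

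Two points need fixing.

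First, commit to the optimality-condition route and drop the alternatives. Absorbing $\tfrac14\|\hat\bU-\bU\|_F^2$ turns the coefficient $2\gamma$, which the statement fixes, into $8\gamma$, so it does not prove the theorem as written. The H\"older suggestion does not recover the exact constants either.

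Second, the concentration step, as you describe it, does not give the claimed rate. A union bound over the $p$ coordinates of $Y=n^{-1/2}\mathbf 1_n^\top\bE$ only yields $\|Y\|_2^2\le C\sigma_p^2\,p\log(np)$. After dividing by $np$ that is $C\sigma_p^2\log(np)/n$, which loses a logarithm on the $1/n$ term. The coordinates of $Y$ need not be independent, because rows of $\bE$ have arbitrary within-row dependence. You therefore need one of two standard tools:
\begin{itemize}
\item a $\tfrac12$-net argument on the unit sphere of $\mathbb R^p$, using at most $5^p$ directions $v$, each with $\langle v,Y\rangle$ sub-Gaussian of norm at most $C\sigma_p$; this gives $\|Y\|_2\le C\sigma_p\bigl(\sqrt p+\sqrt{\log(np)}\bigr)$ with probability at least $1-\frac{1}{np}$;
\item the quadratic-form bound for sub-Gaussian vectors that the paper uses, Exercise 6.11 of \cite{vershyninHighDimensionalProbabilityIntroduction2026}, applied to $\bI_p\otimes\frac{\mathbf 1_n\mathbf 1_n^\top}{n}$, which has trace $p$ and operator norm $1$.
\end{itemize}
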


To gain intuition and contextualize the result, we compare this bound to the one established in \cite{tanStatisticalPropertiesConvex2015} for complete, unweighted graphs. Via \cref{thm:data-dependent-standard}, if $\Phi_{ij} = 1$ for all $i \neq j$ and $\gamma = \frac{4\sigma_p \sqrt{p \log(np)}}{n}$ then 
\begin{equation}
    \frac{1}{2np} \|\bU - \hat \bU\|_F^2 \leq c_1\sigma_p^2\brk3{\frac{1}{n} + {\frac{\log(np)}{n p}}} + 8 \sigma_p \sqrt{\frac{\log(np)}{p}}\diam(\mathcal C), \label{eqn:fully-connected-bound}
\end{equation}
with probability at least $1 - \frac{3}{np}$, where $\diam(\mathcal C)$ is the diameter of $\mathcal C$. This bound is tighter than the previous result of  \cite{tanStatisticalPropertiesConvex2015}, implying if $\sigma_p$ remains constant as $p$ grows and $\diam(\mathcal C) = o\brk1{\sqrt{\frac{p}{\log(np)}}}$ then the mean squared error converges to zero in probability. For a fixed $p$, the na\"ive choice of uniform weights is not sufficient for convergence as $n \to \infty$ under the general settings described in \cref{assumption:data-generation}. In \cite{chiWhyHowConvex2025}, the authors note several papers on the recovery abilities of convex clustering supporting this fact, and require the data from each cluster is sufficiently far apart for uniform weights to give a strong answer. 

The upper bound on the mean squared error in \cref{thm:data-dependent-standard} comprises two terms. The first term in  \cref{eqn:thm-bound} is dependent on the error variance and goes to zero at a sublinear rate with respect to $n$. The second term is more nuanced and depends on the product between a term involving the true $\bU$ and a term involving a more esoteric quantity, $\bF^\dagger$. We refer to the factor appearing in the second term, 
\begin{equation}
    \frac{1}{n}\|\bF^\dagger \bE\|_{\max} \sum_{(i, j) \in \mathcal E} \sqrt{\Phi_{ij}} \|\bU_{i\cdot} - \bU_{j\cdot}\|_2 , \label{eqn:oracle-term}
\end{equation}
as the \textit{oracle term} since computing this quantity requires the unknown true centroids and errors. 

This oracle quantity---and in turn the upper bound---increases with the number of $\Phi_{ij} > 0$ when observations $i$ and $j$ do not share the same true centroid. This lends new theoretical support to the widespread choice to initialize convex clustering with sparse yet sensible affinities in practice. From this lens, a sparser graph has fewer positive entries in $\bPhi$,  and a data-driven choice such as $k$-nearest neighbors may help to avoid erroneous (between-cluster) edges while maintaining sparsity. Also, note that the oracle term remains the same when scaling $\bPhi$ by a positive constant. \cref{thm:data-dependent-standard} also provides a sufficient condition for consistency: the mean number of between-cluster edges per node should not outpace $\|\bF^\dagger \bE\|_{\max}$ decreasing. 
Immediately, this yields the following bound for choices of $\bPhi$ that do not depend on $\bE$.

\begin{lemma}\label{lemma:crude-bound} 
Let $\bX \in \mathbb R^{n\times p}$ be generated according to \cref{assumption:data-generation}. Let $\bPhi \in \mathbb \{0, 1\}^{n \times n}$ be an adjacency matrix corresponding to a connected unweighted affinity graph as defined in \cref{def:affinity-graph}. Let the set of between-cluster edges be $\mathcal I = \{(i, j) \colon \bU_{i\cdot} \neq \bU_{j\cdot}, \Phi_{ij} = 1 \}$. If $\bPhi$ is independent of $\bE$, then
    \begin{displaymath}
        \frac{\|\bF^\dagger \bE\|_{\max}}{n} \sum_{(i, j) \in \mathcal E} \sqrt{\Phi_{ij}} \|\bU_{i\cdot} - \bU_{j\cdot} \|_2 \leq 3\sigma_p |\mathcal I| \sqrt{\frac{\log(np)}{n}} \diam(\mathcal C),
    \end{displaymath}
    with probability at least $1 - \frac{2}{np}$.
\end{lemma}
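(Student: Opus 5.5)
The plan is to split the left-hand side into a deterministic graph factor and a random factor, and to bound each separately. The graph factor is easy. Since $\bPhi \in \{0,1\}^{n\times n}$, every edge has $\sqrt{\Phi_{ij}} = 1$. Within-cluster edges have $\bU_{i\cdot} = \bU_{j\cdot}$ and contribute nothing, while each between-cluster edge contributes at most $\diam(\mathcal C)$. Hence
\[
\sum_{(i,j)\in\mathcal E}\sqrt{\Phi_{ij}}\,\|\bU_{i\cdot}-\bU_{j\cdot}\|_2 \le |\mathcal I|\,\diam(\mathcal C).
\]
It therefore suffices to show $\|\bF^\dagger \bE\|_{\max} \le 3\sigma_p\sqrt{n\log(np)}$ with probability at least $1-\frac{2}{np}$.

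To control $\|\bF^\dagger\bE\|_{\max}$, I would condition on $\bPhi$, which is independent of $\bE$. Each entry has the form $(\bF^\dagger\bE)_{ek} = \sum_i (\bF^\dagger)_{ei} E_{ik}$. For fixed $k$, the $E_{ik}$ are independent across $i$ and mean zero, with $\|E_{ik}\|_{\psi_2}\le\sigma_p$ (take $\bv$ to be a standard basis vector in \cref{assumption:data-generation}). The general Hoeffding inequality for sub-Gaussian sums then shows that each entry is sub-Gaussian with norm at most $C\sigma_p\|\operatorname{row}_e(\bF^\dagger)\|_2$. This gives the tail bound
\[
\Pr\brk1{|(\bF^\dagger\bE)_{ek}|>t} \le 2\exp\brk1{-t^2/(C\sigma_p^2\|\operatorname{row}_e(\bF^\dagger)\|_2^2)}.
\]
There are $|\mathcal E|\,p\le n^2p$ entries. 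A union bound with $t = 3\sigma_p\sqrt{n\log(np)}$ yields the claimed probability, provided $\|\operatorname{row}_e(\bF^\dagger)\|_2^2\le n$ for every edge $e$ and the Hoeffding constant is tracked carefully. The exponent must beat $3\log(np)$ up to the factor $2$. Squeezing the constant $3$ out of the Hoeffding constant is the one bookkeeping step I expect to need care; it may require using the $\psi_2$-definition directly via the moment generating function.

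The main obstacle is the row-norm bound, and I would prove it with electrical networks. Since $\bF^\dagger = \bF^\top(\bF\bF^\top)^\dagger = \bF^\top\bL^\dagger$, the row indexed by the edge $e=(i,j)$ is $\bphi_e^\top := (\be_i-\be_j)^\top\bL^\dagger$. Here $\bphi_e = \bL^\dagger(\be_i-\be_j)$ is the centered vector of node potentials when a unit current is injected at $i$ and extracted at $j$. By the maximum principle for harmonic functions on the connected graph, every potential lies between $\phi_j$ and $\phi_i$. Moreover, $\phi_i-\phi_j = R_{\mathrm{eff}}(i,j)\le 1$, because $i$ and $j$ are joined directly by a unit-conductance edge and adding parallel paths only lowers resistance. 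Centering subtracts the mean, which also lies in $[\phi_j,\phi_i]$, so every entry of $\bphi_e$ has absolute value at most $1$. Therefore $\|\bphi_e\|_2^2\le n$.

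This argument is far better than the crude spectral bound $1/\lambda_2(\bL)$, which can be of order $n^2$. Combining the row-norm bound with the union bound and the graph-factor bound, then dividing by $n$, gives the stated inequality.
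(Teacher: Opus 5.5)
Your proposal is correct and follows the paper's proof essentially step for step: the paper bounds the edge sum by $|\mathcal I|\diam(\mathcal C)$, controls $\|\bF^\dagger\bE\|_{\max}$ by conditioning on $\bPhi$ and combining a sub-Gaussian Hoeffding bound with a union bound (its corollary for $\bF$ independent of $\bE$), and proves $\|\bF^\dagger\|_{\max}\le 1$ by the same maximum-principle argument together with $R_{jk}\le 1$ for adjacent nodes, so each row of $\bF^\dagger$ has squared norm at most $n$. On the constant you flagged, the paper reaches $\sqrt{8}\le 3$ by taking each entry's sub-Gaussian norm to be at most $\sigma_p\|\bF^\dagger_{\mathcal E(j,k)\cdot}\|_2$, i.e., without Hoeffding's absolute constant; if you want real slack, note that each row sums to zero and has entries in an interval of length at most $R_{jk}\le 1$, so its squared norm is at most $n/4$ rather than $n$.
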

\cref{lemma:crude-bound} is shown by first proving \cref{corollary:independent-F-dagger-E-max}, a concentration inequality for $\|\bF^\dagger \bE\|_{\max}$ when $\bF$ and $\bE$ are independent. The result then follows  by showing $\|\bF^\dagger\|_{\max}$ is bounded by one, using known properties on the commute times of unweighted graphs. Although \cref{lemma:crude-bound} appears to bound mean squared error by a rate of $\sqrt{\frac{\log(n)}{n}}$, this occurs only if the size of $\mathcal I$ is bounded by a constant as $n$ grows. In \cref{sec:bounds}, we show that improved bounds can be derived by additionally considering the within-cluster connectivity.

The number of between-cluster edges $|\mathcal I|$  is not the end of the story: other intrinsic properties of $G$ which influence $\bF^\dagger$ also affect the performance of convex clustering. \cref{fig:knn-bottleneck} suggests that flaws in the clustering solution from the illustrative example in \cref{fig:knn-graph-solution} are associated with the norm of $\bF^\dagger_{\mathcal E(i, j)\cdot}$. In this example, we see several points that should belong to true component 3 but are misclassified due to proximity in both Euclidean distance and the $k$-nearest neighbor edges. Moreover, there is an extra solution component between clusters 1 and 2. In \cref{fig:knn-bottleneck}, there are several high values of $\|\bF^\dagger_{\mathcal E(j,k)\cdot}\|_2$ where points assigned to this erroneous extra cluster connect to the rest of the affinity graph. Inspecting the figure, high values of $\|\bF^\dagger_{\mathcal E(j,k)\cdot}\|_2$ are associated with whether the edge between nodes $j$ and $k$ could be characterized as a ``bottleneck". 

Relating these observations back to \cref{eqn:oracle-term}, we can produce a better solution for the convex clustering problem from \cref{fig:knn-graph-both} with $k = 14$ and $\gamma \approx 0.9$, resulting in an adjusted Rand index of 0.59. Using $k=14$ leads to 121 between-cluster edges while $k=4$ gives only 25. Here, $\|\bF^\dagger \bE\|_{\max} \approx 7.16$ for $k=4$, but  it is only $\|\bF^\dagger \bE\|_{\max} \approx 0.84$ when $k=14$, leading to a smaller oracle term.

\begin{figure}[htbp]
    \centering
    \includegraphics[width=11cm]{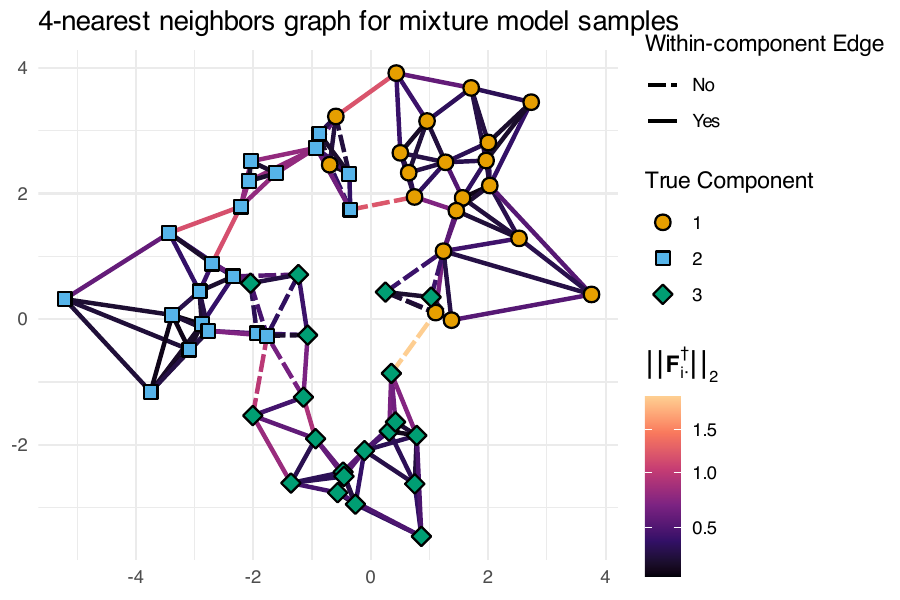}
    \caption{A 4-nearest neighbors affinity graph for 60 points (colored by true component) in $\mathbb R^2$, where edges are colored by $\|\bF^\dagger_{\mathcal E(j, k)\cdot}\|_2$ and are dashed if between points from different components.}
    \label{fig:knn-bottleneck}
\end{figure}

\section{Further Bounds via Commute Times}\label{sec:bounds} The discussion above motivates us to examine properties of $\bF^\dagger$ more closely in order to better understand and bound the oracle term, leading to tighter bounds from \cref{thm:data-dependent-standard}. This section provides a conceptual interpretation of the oracle term, linking convex clustering error to graph bottlenecks concretely. We begin by relating $\bF^\dagger$ to the commute times of a random walk over the corresponding affinity graph.
\begin{lemma}\label{lemma:F-dagger-form}
    Let $\bPhi$ be the symmetric adjacency matrix of an undirected graph $G = (V, \mathcal E)$ with degree matrix $\bD$ and Laplacian $\bL = \bD - \bPhi$. Define the natural random walk over $G$ as the Markov chain with transition matrix $\bP = \bD^{-1} \bPhi$. Then, there exists an oriented edge-incidence matrix $\bF$ such that
    \begin{displaymath}
        \bF^\dagger_{\mathcal E(j, k) \cdot} = \sqrt{\Phi_{jk}} (\bL^\dagger_{\cdot j} - \bL^\dagger_{\cdot k}) = \frac{\sqrt{\Phi_{jk}}}{2\vol(G)} \brk3{\bI_n - \frac{\mathbf 1_n \mathbf 1_n^\top}{n}} (\bC_{\cdot k} - \bC_{\cdot j}),
    \end{displaymath}
    where $\vol(G) = \operatorname{trace}(\bD)$ is the volume of $G$ and $\bC \in \mathbb R_{\geq 0}^{n\times n}$ is the matrix of commute times, i.e. $C_{ij} = H_{ij} + H_{ji}$, with $H_{ij}$ the expected time to travel from node $i$ to node $j$ in the Markov chain.
\end{lemma}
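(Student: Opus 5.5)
We prove the two equalities separately. The first is a pseudoinverse identity for the incidence matrix. The second rewrites Laplacian pseudoinverse entries in terms of commute times, using the standard identity $C_{ij} = \vol(G)\,(L^\dagger_{ii} + L^\dagger_{jj} - 2L^\dagger_{ij})$ for a connected graph.

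\textbf{First equality.} We use $\bF^\dagger = (\bF\bF^\top)^\dagger \bF$ applied to $\bF^\top$. Since the Moore--Penrose pseudoinverse satisfies $(\bA^\top)^\dagger = (\bA^\dagger)^\top$ and $\bA^\dagger = \bA^\top(\bA\bA^\top)^\dagger$, we get
\begin{displaymath}
\bF^\dagger = \bF^\top (\bF\bF^\top)^\dagger = \bF^\top \bL^\dagger .
\end{displaymath}
The row of $\bF^\dagger$ indexed by $\mathcal E(j,k)$ is therefore $\operatorname{col}^\top_{\mathcal E(j,k)}(\bF)\,\bL^\dagger = \sqrt{\Phi_{jk}}(\bL^\dagger_{j\cdot} - \bL^\dagger_{k\cdot})$. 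Because $\bL^\dagger$ is symmetric, this equals $\sqrt{\Phi_{jk}}(\bL^\dagger_{\cdot j} - \bL^\dagger_{\cdot k})$, written as a column. The orientation of $\bF$ only flips signs, so the freedom ``there exists an $\bF$'' lets us fix the sign convention needed to match the commute-time expression.

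\textbf{Second equality.} By the commute-time identity above, $C_{ij} = \vol(G)(\be_i - \be_j)^\top \bL^\dagger(\be_i - \be_j)$. I would cite this (Chandra et al.; Lov\'asz) rather than reprove it, although it follows from the hitting-time linear system $\bL \bH_{\cdot j}$-type equations. Set $\bd = \operatorname{diag}(\bL^\dagger)$ and $\vol(G) = V$. Then
\begin{displaymath}
\bC_{\cdot k} - \bC_{\cdot j} = V\brk1{\bd + L^\dagger_{kk}\mathbf 1 - 2\bL^\dagger_{\cdot k} - \bd - L^\dagger_{jj}\mathbf 1 + 2\bL^\dagger_{\cdot j}} = 2V(\bL^\dagger_{\cdot j} - \bL^\dagger_{\cdot k}) + V(L^\dagger_{kk} - L^\dagger_{jj})\mathbf 1 .
\end{displaymath}
Applying the centering projector $\bI_n - \mathbf 1\mathbf 1^\top/n$ removes the constant vector. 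Since the graph is connected, $\bL^\dagger \mathbf 1 = 0$, so every column of $\bL^\dagger$ is already centered and the projector leaves $\bL^\dagger_{\cdot j} - \bL^\dagger_{\cdot k}$ unchanged. Dividing by $2V$ and multiplying by $\sqrt{\Phi_{jk}}$ gives the claim.

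The main obstacle is the commute-time identity, which is where the random-walk structure enters; I would cite it as known. The rest is bookkeeping, apart from checking sign conventions and the orientation of $\bF$.
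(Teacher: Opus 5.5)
Your proof is correct and follows the paper's route: first $\bF^\dagger = \bF^\top \bL^\dagger$, then a cited identity linking $\bL^\dagger$ to commute times, followed by algebra. The only difference is that you start from the forward identity $C_{ij} = \vol(G)(\be_i-\be_j)^\top \bL^\dagger(\be_i-\be_j)$ and carry out the centering explicitly using $\bL^\dagger \mathbf 1 = \bzero$, whereas the paper cites the already-inverted formula (B2) of Van Mieghem et al.\ and leaves the algebra implicit. (Minor slip: your opening phrase should read $(\bF^\top)^\dagger = (\bF\bF^\top)^\dagger \bF$, though the identity you actually display is right.)
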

\cref{lemma:F-dagger-form} follows directly from the definition of $\bF$, and the ability to write terms of the  Laplacian pseudoinverse as commute times; this interpretation is described in detail in \cite{vanmieghemPseudoinverseLaplacianBest2017}. The characterization in \cref{lemma:F-dagger-form} explicitly reveals how $\|\bF^\dagger_{\mathcal E(j, k) \cdot}\|_2$ is a bottleneck measure. For an edge in $G$, the quantity describes the average difference in commute times across the entire graph for the nodes sharing the edge. Note the Laplacian $\bL$ is invariant to the choice of orientation for $\bF$, so this norm is invariant as well. The representation of $\bF^\dagger$ in terms of commute times will provide a conduit for finite-sample bounds on the mean squared error of solutions to \cref{eqn:clusterpath} via the oracle term. By bounding the row norms of $\bF^\dagger$, concentration inequalities can be used to bound the oracle term with high probability. This can occur by either explicitly calculating the commute times for an affinity graph model, or appealing to various concentration inequalities for commute times.

\cref{lemma:F-dagger-form} has utility toward strengthening our error bounds for a variety of graph construction methods. \cref{lemma:F-dagger-form} allows us to write $\bF^\dagger$ in \cref{eqn:oracle-term} as a function of the hitting/commute times, which behave in  a more transparent way. \cite{luxburgHittingCommuteTimes2014} show that rescaled hitting times concentrate towards the inverse degree of a node for many random geometric graphs. This is possible by examining spectral properties of the natural random walk, summarized below: 
    \begin{proposition}[Proposition 5 from \cite{luxburgHittingCommuteTimes2014}] \label{prop:luxberg} Let $G$ be a fixed, connected, undirected, possibly weighted graph that is not bipartite, with adjacency matrix $\bPhi$ and degree matrix $\bD$. Then for all $i \neq j$ 
    \begin{equation}
        \bigg|\frac{H_{ij}}{\vol(G)} - \frac{1}{D_{jj}}\bigg| \leq 2\brk3{\frac{1}{1-\lambda_2} + 1}\frac{\|\bPhi\|_{\max}}{\min_k D_{kk}^2}, \label{eqn:luxberg-prop}
    \end{equation}
    where $1 - \lambda_2$ is the second largest eigenvalue of $\bP = \bD^{-1} \bPhi$ and $H_{ij}$ is the expected hitting time from node $i$ to $j$ with transition probability matrix $\bP$.
\end{proposition}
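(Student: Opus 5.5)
The plan is to work with the symmetric normalized adjacency matrix $\bA = \bD^{-1/2}\bPhi\bD^{-1/2}$, which is similar to $\bP$. Write its eigendecomposition as $\bA = \sum_{k=1}^n \mu_k \boldsymbol u_k \boldsymbol u_k^\top$ with orthonormal eigenvectors and $1 = \mu_1 > \mu_2 \ge \dots \ge \mu_n$. Connectivity gives $\boldsymbol u_1 = \bD^{1/2}\mathbf 1_n/\sqrt{\vol(G)}$, and the non-bipartite assumption gives $\mu_n > -1$. Hence every $1-\mu_k$ with $k\ge 2$ is bounded below by a positive spectral gap. This gap is the quantity I read as $1-\lambda_2$ in \cref{eqn:luxberg-prop}, understood as $\min_{k\ge2}(1-\mu_k)$, which is where non-bipartiteness enters. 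Write $d_j = D_{jj}$. The starting point is the classical spectral formula for hitting times (Lovász),
\begin{displaymath}
    \frac{H_{ij}}{\vol(G)} = \sum_{k\ge 2} \frac{1}{1-\mu_k}\brk3{\frac{u_{kj}^2}{d_j} - \frac{u_{ki}u_{kj}}{\sqrt{d_i d_j}}}.
\end{displaymath}
If needed, I will derive it from $H_{ij} = \vol(G)(L^\dagger_{jj} - L^\dagger_{ij}) + (\text{terms linear in } \bD)$, or directly from the fundamental matrix of $\bP$.

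The key step is the algebraic split $\frac{1}{1-\mu} = 1 + \mu + \frac{\mu^2}{1-\mu}$, which gives three sums. Each is evaluated using orthonormality, $\sum_{k\ge 1} u_{ki}u_{kj} = \delta_{ij}$, together with $\sum_{k\ge1}\mu_k u_{ki}u_{kj} = A_{ij}$ and the explicit form of $\boldsymbol u_1$.

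1. The constant term gives $\frac{1}{d_j}\brk1{1-\frac{d_j}{\vol(G)}} + \frac{1}{\vol(G)} = \frac{1}{d_j}$, which is exactly the target.
2. The linear term uses $A_{jj}=0$ (no self-loops) and $A_{ij} = \Phi_{ij}/\sqrt{d_id_j}$. After subtracting the $k=1$ contributions it collapses to $-\Phi_{ij}/(d_id_j)$, whose absolute value is at most $\|\bPhi\|_{\max}/\min_k D_{kk}^2$.
3. The quadratic remainder is bounded term by term. Pull out $\max_{k\ge2}(1-\mu_k)^{-1}$ and apply Cauchy--Schwarz to the cross term. Both pieces are then controlled by $\sum_k \mu_k^2 u_{kj}^2 \le (\bA^2)_{jj}$. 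Since $(\bA^2)_{jj} = \sum_l \Phi_{jl}^2/(d_jd_l) \le \|\bPhi\|_{\max}/\min_k D_{kk}$, the diagonal piece is at most $\frac{1}{1-\lambda_2}\frac{\|\bPhi\|_{\max}}{\min_k D_{kk}^2}$. The cross piece obeys the same bound via $\sqrt{(\bA^2)_{ii}(\bA^2)_{jj}}/\sqrt{d_id_j}$.

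Summing the three contributions gives a bound of $\brk1{\frac{2}{1-\lambda_2}+1}\frac{\|\bPhi\|_{\max}}{\min_k D_{kk}^2}$, which is dominated by the stated $2\brk1{\frac{1}{1-\lambda_2}+1}\frac{\|\bPhi\|_{\max}}{\min_k D_{kk}^2}$.

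I expect the main obstacle to be the quadratic remainder in step 3. The factor $\mu_k^2/(1-\mu_k)$ need not be nonnegative in a useful direction when $\mu_k$ is negative. So I must use absolute values, and the relevant gap is $\min_{k\ge2}(1-\mu_k)$, i.e.\ the smallest nonzero eigenvalue of $\bI - \bA$. That is the role of the non-bipartiteness hypothesis: it rules out $\mu_n=-1$, which would otherwise make the bound vacuous. A second point to check is that the $k=1$ term is correctly excluded in each of the three identities, since $\boldsymbol u_1$ carries the $1/\vol(G)$ corrections that cancel in step 1. Finally, I will verify the starting spectral formula against a small example (e.g.\ a triangle) before relying on it.
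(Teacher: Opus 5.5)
The paper gives no proof of this proposition; it cites Proposition 5 of von Luxburg, Radl and Hein. Your argument is correct, and it is essentially the standard proof from that source. It uses Lov\'asz's spectral formula for $H_{ij}$ and the split $\frac{1}{1-\mu}=1+\mu+\frac{\mu^2}{1-\mu}$. Orthonormality then extracts $1/D_{jj}$, and the identities $\sum_k \mu_k u_{ki}u_{kj}=A_{ij}$ and $\sum_k\mu_k^2u_{kj}^2=(\bA^2)_{jj}\le \|\bPhi\|_{\max}/\min_k D_{kk}$ control the rest. Because you keep the exact cancellation of the $k=1$ contributions in the linear term, you get the slightly sharper constant $\frac{2}{1-\lambda_2}+1$ in place of $2\bigl(\frac{1}{1-\lambda_2}+1\bigr)$.

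One correction to your commentary, which does not affect the argument: non-bipartiteness plays no role in it. Your claims that $\mu_k^2/(1-\mu_k)$ may be negative and that $\mu_n=-1$ would make the bound vacuous are both false. For every $k\ge 2$, connectivity alone gives $\mu_k\le\mu_2<1$. So $0\le \frac{\mu_k^2}{1-\mu_k}\le\frac{\mu_k^2}{1-\mu_2}$ whatever the sign of $\mu_k$, and $\mu_n=-1$ would merely give $1-\mu_n=2$. Your $\min_{k\ge2}(1-\mu_k)$ is just $1-\mu_2$. That is what the paper intends by $1-\lambda_2$: the statement's wording is garbled, but the proof of \cref{thm:edge-crossings-bound} treats $\lambda_2$ as the second largest eigenvalue of $\bP$.

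Lov\'asz's formula also holds for every connected graph. Solve $\bL h=\bD\mathbf 1_n-\vol(G)\be_j$, where $h_i=H_{ij}$, using the generalized inverse $\bD^{-1/2}(\bI-\bA)^\dagger\bD^{-1/2}$ of $\bL$. Aperiodicity, i.e.\ non-bipartiteness, is needed only if one derives the formula from the series $\sum_{t\ge0}(\bP^t-\mathbf 1_n\pi^\top)$, with $\pi$ the stationary distribution.
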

\cref{prop:luxberg} can help describe the concentration of hitting times for $k$-nearest neighbor, $\epsilon$-radius, complete weighted, and several other random graph models relevant to affinities in convex clustering \cite{luxburgHittingCommuteTimes2014}. In particular, for graphs that can be characterized using \cref{prop:luxberg}, we can immediately bound the entries of $\bF^\dagger$ using the form described in \cref{lemma:F-dagger-form}. 
\begin{lemma}\label{lemma:hitting-time-bound}
    Let $G = (V, \mathcal E)$ be a fixed, connected, undirected, possibly weighted graph that is not bipartite, with adjacency matrix $\bPhi$ and degree matrix $\bD$. Let $\bF$ be an oriented edge-incidence matrix for $G$ and $\bP = \bD^{-1}\bPhi$ be the transition matrix for a random walk over $G$. Then
    \begin{align*}
        |\bF^\dagger_{\mathcal E(j, k), i}| & \leq \sqrt{\Phi_{jk}}\brk3{\frac{1}{n} \bigg|{\frac{1}{ D_{kk}} - \frac{1}{ D_{jj}}}\bigg| + 2\max_{\ell \neq m} \bigg|\frac{H_{\ell m}}{\vol(G)} - \frac{1}{D_{mm}}\bigg|}, \quad (i \notin \{j, k\}), \\
        |\bF^\dagger_{\mathcal E(j, k), j}| & \leq \sqrt{\Phi_{jk}} \brk3{\frac{1}{D_{jj}} + \frac{1}{nD_{kk}} + 2 \max_{\ell \neq m} \bigg|\frac{H_{\ell m}}{\vol(G)} - \frac{1}{D_{mm}}\bigg| },
    \end{align*}
    where $H_{ij}$ is the expected hitting time from node $i$ to node $j$ for the random walk.
\end{lemma}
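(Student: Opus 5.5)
The plan is to start from the exact commute-time representation in \cref{lemma:F-dagger-form} and replace each commute time by its first-order approximation in terms of inverse degrees, keeping track of the remainders. Fix the edge $(j,k)$ and a node $i$. Define
\begin{displaymath}
    g(\ell) = \frac{C_{\ell k} - C_{\ell j}}{\vol(G)}, \qquad \delta = \max_{\ell \neq m} \bigg|\frac{H_{\ell m}}{\vol(G)} - \frac{1}{D_{mm}}\bigg|.
\end{displaymath}
Then \cref{lemma:F-dagger-form} reads
\begin{displaymath}
    \bF^\dagger_{\mathcal E(j,k),i} = \frac{\sqrt{\Phi_{jk}}}{2}\Big(g(i) - \frac{1}{n}\sum_{\ell} g(\ell)\Big) = \frac{\sqrt{\Phi_{jk}}}{2n}\sum_{\ell}\big(g(i) - g(\ell)\big).
\end{displaymath}
Rewriting the centering step as an average of the pairwise differences $g(i)-g(\ell)$ is the key structural move. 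Any additive constant common to all $g(\ell)$ cancels, so only the error terms and the special nodes $\ell \in \{j,k\}$ contribute.

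Next, set $e_{ab} = H_{ab}/\vol(G) - 1/D_{bb}$, so that $|e_{ab}| \le \delta$. Since $C_{ab} = H_{ab} + H_{ba}$ and $C_{aa}=0$, we get, for $a \neq b$,
\begin{displaymath}
    \frac{C_{ab}}{\vol(G)} = \frac{1}{D_{aa}} + \frac{1}{D_{bb}} + e_{ab} + e_{ba}.
\end{displaymath}
Hence for $\ell \notin \{j,k\}$ we have $g(\ell) = \frac{1}{D_{kk}} - \frac{1}{D_{jj}} + (\text{error})$, where the error is a signed combination of four $e$'s. At the special nodes, $g(j) = C_{jk}/\vol(G) \approx \frac{1}{D_{jj}} + \frac{1}{D_{kk}}$ and $g(k) = -C_{kj}/\vol(G) \approx -\frac{1}{D_{jj}} - \frac{1}{D_{kk}}$.

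Case $i \notin \{j,k\}$. For $\ell \notin \{j,k\}$, the leading terms of $g(i)-g(\ell)$ cancel and only errors remain. For the two special nodes, the leading parts of $g(j)$ and $g(k)$ cancel each other when summed. So the main term collects into
\begin{displaymath}
    \frac{1}{2n}\Big(2\Big(\frac{1}{D_{kk}} - \frac{1}{D_{jj}}\Big)\Big) = \frac{1}{n}\Big(\frac{1}{D_{kk}} - \frac{1}{D_{jj}}\Big),
\end{displaymath}
which is exactly the first term of the claimed bound.

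Case $i = j$. For $\ell \notin \{j,k\}$, $g(j)-g(\ell) \approx 2/D_{jj}$. For $\ell = k$, $g(j)-g(k) \approx 2/D_{jj} + 2/D_{kk}$. For $\ell = j$ the difference is $0$. Summing over the $n-2$ generic nodes and the node $k$, dividing by $2n$, and bounding $(n-1)/n \le 1$ gives the main term $\frac{1}{D_{jj}} + \frac{1}{nD_{kk}}$, again matching the statement.

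The main obstacle is the constant on the error term. Naively, $g(i)-g(\ell)$ contains eight $e$'s, which would give $4\delta$ after dividing by $2$, not the stated $2\delta$. I would first try to group the errors more carefully. The terms sharing a target, $e_{ik}-e_{\ell k}$ and $e_{ij}-e_{\ell j}$, should be kept as differences. The terms $e_{ki}-e_{ji}$ and $e_{k\ell}-e_{j\ell}$ should be handled by averaging over $\ell$ before bounding, in the hope that half the errors telescope or are absorbed into the averaging. If that fails, I would check whether a sharper per-term bound is available (e.g.\ the errors $e_{ab}$ for fixed target $b$ having a common sign), or else accept a slightly larger absolute constant. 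That would not affect any downstream use of the lemma together with \cref{prop:luxberg}.
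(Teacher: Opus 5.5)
Your route is the paper's: expand the commute-time formula of \cref{lemma:F-dagger-form} using $H_{ab}=\vol(G)/D_{bb}+\text{error}$, read off the main terms, and bound the rest by the triangle inequality. Your average-of-differences form is a reorganization of the paper's $\bar{\bC}$ bookkeeping. Your main terms are correct in both cases, and $g(j)+g(k)=0$ holds exactly. There is a gap, and it is the one you flagged; your plan does not close it. For $i\notin\{j,k\}$ the remainder is exactly
\[
\frac{\sqrt{\Phi_{jk}}}{2n}\Big[(n-1)\epsilon_i-\sum_{\ell\notin\{i,j,k\}}\epsilon_\ell\Big],\qquad \epsilon_\ell=e_{\ell k}+e_{k\ell}-e_{\ell j}-e_{j\ell}.
\]
For $i=j$ it is $\frac{\sqrt{\Phi_{jk}}}{2n}\big[n(e_{jk}+e_{kj})-\sum_{\ell\notin\{j,k\}}\epsilon_\ell\big]$. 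Each is a fixed linear combination of distinct $e_{ab}$'s, so no regrouping changes what the triangle inequality gives. You get $\frac{4(n-2)}{n}\delta$ and $\frac{3n-4}{n}\delta$ (times $\sqrt{\Phi_{jk}}$), and both coefficients exceed $2$ once $n\ge 5$. Your sign idea also fails. In the friendship graph (triangles $\{c,a_t,b_t\}$ glued at $c$), $e_{b_1a_1}=-\tfrac16$ while $e_{oa_1}=+\tfrac16$ for $o$ in another triangle.

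The paper's proof has the same gap; its $2\delta$ comes from two slips. For $i\notin\{j,k\}$, its displayed identity drops $\frac{n-1}{n}(\eta_{ik}+\eta_{ki}-\eta_{ij}-\eta_{ji})$, because the $\ell=i$ summand of the mean cancels only a $1/n$ fraction of the standalone $\eta$'s. On $K_4$ minus an edge, with $j$ of degree $3$ and $k,i$ the two degree-$2$ vertices, that identity gives $3/64$ for an entry equal to $1/8$. For $i=j$ its identity is correct, but the triangle inequality gives $\frac{3n-4}{n}$, not $2$. So your fallback, roughly $4$ in place of $2$ (and $3$ on the diagonal entry), is what this method proves. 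That is all \cref{thm:edge-crossings-bound} needs, since there constants are absorbed into $C$.

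To reach $2\delta$ you need an identity, not a regrouping. Write $(\bF^\dagger_{\mathcal E(j,k)\cdot})^\top=\frac{\sqrt{\Phi_{jk}}}{\vol(G)}\big(\bI_n-\frac{\mathbf 1_n\mathbf 1_n^\top}{n}\big)(\bH_{k\cdot}-\bH_{j\cdot})^\top$. Then apply the random target lemma: $\sum_\ell\pi_\ell H_{a\ell}$ does not depend on $a$, where $\pi_\ell=D_{\ell\ell}/\vol(G)$. This turns the centering into $\sum_\ell(\frac1n-\pi_\ell)(H_{k\ell}-H_{j\ell})/\vol(G)$. On degree-regular graphs that term vanishes. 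Then $\bF^\dagger_{\mathcal E(j,k),i}=\sqrt{\Phi_{jk}}(e_{ki}-e_{ji})$ for $i\notin\{j,k\}$ and $\sqrt{\Phi_{jk}}(1/D_{jj}+e_{kj})$ for $i=j$, which gives the stated bounds. In general an extra term of size at most $2\delta\sum_\ell|\pi_\ell-\frac1n|$ remains. So the constant $2$ stays unproved for irregular graphs, by your argument and by the paper's.
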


\cref{lemma:hitting-time-bound} is proved in \cref{appendix:lemma:hitting-time-bound} by decomposing $\bF^\dagger$ into hitting times via \cref{lemma:F-dagger-form}, rewriting in terms of the left-hand side of \cref{eqn:luxberg-prop}, and applying the triangle inequality. 
As a consequence of bounding entries of $\bF^\dagger$ in terms of $(1-\lambda_2)^{-1}$ and the degree reciprocals, the trade-offs in the oracle term from \cref{eqn:oracle-term} become clearer. As $G$ becomes more connected, $\lambda_2$ will decrease and the degrees will increase, so the bound on $\bF^\dagger$ grows tighter. However, as the number of edges in $G$ grows, so does the number of potential edges between data points with different centroids.

\subsection{Bounds for Specific Affinity Graphs} 
To illustrate how \cref{lemma:F-dagger-form} is useful toward explicit forms of the bound \cref{eqn:thm-bound} in \cref{thm:data-dependent-standard}, we can consider cases where $\bPhi$ has a known structure that leads to a high-probability worst-case bound for the mean squared error under an ideal choice of affinities. For simplicity, we focus on $\bU$ having only two unique rows and each cluster having the same size.
\begin{corollary}\label{corollary:oracle-graph}
    Assume the data-generating mechanism of Assumption \ref{assumption:data-generation} and suppose $\bU \in \mathbb R^{n \times p}$, $n \geq 6$, has an even number of rows, where $\bU_{1\cdot},\ldots,\bU_{n/2\cdot} = \bc_1$ and $\bU_{n/2+1\cdot},\ldots,\bU_{n\cdot} = \bc_2$. Construct $\bPhi \in \{0, 1\}^{n\times n}$ such that $\Phi_{ij} = 1$ if $\bU_{i\cdot} = \bU_{j\cdot}$. Next, let $\Phi_{i, (i + n / 2)} = 1$ for $1 \leq k \leq n/2$ distinct values of $i$ in $\{1,\ldots,n/2\}$. Then
    \begin{displaymath}
        \frac{\|\bF^\dagger \bE\|_{\max}}{n\sqrt{p}} \sum_{(i, j) \in \mathcal E} \sqrt{\Phi_{ij}} \|\bU_{i\cdot} - \bU_{j\cdot}\|_2 \leq \sigma_p\sqrt{\frac{\log(np)}{p}} \brk3{\sqrt{\frac{2}{n}} + \frac{8k}{n^2}} \|\bc_1 - \bc_2\|_2,
    \end{displaymath}
    with probability at least $1 - \frac{2}{np}$.
\end{corollary}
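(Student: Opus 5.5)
Because the graph here is fixed and independent of $\bE$, the plan is to bound the oracle term by splitting it into a deterministic factor and a random factor. The deterministic factor is the sum: only the $k$ between-cluster edges $(i,i+n/2)$ contribute. Each such edge has $\Phi_{ij}=1$ and centroid difference $\|\bc_1-\bc_2\|_2$, so
\[
\sum_{(i,j)\in\mathcal E}\sqrt{\Phi_{ij}}\|\bU_{i\cdot}-\bU_{j\cdot}\|_2 = k\|\bc_1-\bc_2\|_2 .
\]
The random factor is $\|\bF^\dagger\bE\|_{\max}$. Each entry $(\bF^\dagger\bE)_{e,l}=\sum_i \bF^\dagger_{e,i}E_{il}$ is a sum of independent mean-zero sub-Gaussians with parameter at most $\sigma_p\|\bF^\dagger_{e\cdot}\|_2$. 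The concentration inequality behind \cref{lemma:crude-bound} (a union bound over $|\mathcal E|\,p\le n^2p$ entries) then gives, with probability at least $1-\frac{2}{np}$,
\[
\|\bF^\dagger\bE\|_{\max}\le c\,\sigma_p\sqrt{\log(np)}\,\max_e\|\bF^\dagger_{e\cdot}\|_2 .
\]
The constant $c$ matches the $3$ used in \cref{lemma:crude-bound}. This reduces the claim to a row-norm bound of the form
\[
\max_e \|\bF^\dagger_{e\cdot}\|_2 \lesssim \frac{1}{k}\sqrt{\frac{n}{2}}+\frac{8}{n}.
\]

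To get this row-norm bound, I will compute $\bL^\dagger$ explicitly using \cref{lemma:F-dagger-form}, which gives $\bF^\dagger_{e\cdot}=\sqrt{\Phi_{jk}}(\bL^\dagger_{\cdot j}-\bL^\dagger_{\cdot k})^\top$. The graph is two cliques $K_{n/2}$ joined by a matching of size $k$, so $\bL$ has enough symmetry to invert on $\mathbf 1^\perp$. I will solve $\bL x=\be_j-\be_k$ with $x\perp\mathbf 1$ separately for within-clique edges and for bridge edges. The solution is constant on orbit classes of the automorphism group: matched versus unmatched vertices in each clique, plus the two endpoints themselves. This leaves a small linear system in a handful of unknowns.

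\textbf{Within-clique edges.} Here $x$ is essentially $(\be_j-\be_k)/(n/2+\text{deg corr.})$, with norm $O(1/n)$. This case should supply the $8/n$-type term.

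\textbf{Bridge edges.} Here the effective resistance between the cliques is about $1/k$. The potential $x$ is roughly $\pm\frac{1}{2k}$ on each whole clique, plus $O(1/n)$ corrections at the endpoints. Its norm is therefore about $\sqrt{n}/(2k)\cdot\sqrt{2}$, giving the $\sqrt{2/n}$ term once multiplied by $k/n$ and divided by $\sqrt p$. The $\sqrt p$ normalization appears on the left-hand side, and the right-hand side carries $\sqrt{\log(np)/p}$, so the $\sqrt{p}$ just passes through.

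Multiplying the two factors then gives
\[
\frac{1}{n\sqrt p}\, k\|\bc_1-\bc_2\|_2\cdot \sigma_p\sqrt{\log(np)}\Bigl(\frac{c'}{k}\sqrt{n}+\frac{c''}{n}\Bigr),
\]
which is the stated form once the constants are tracked. The hypothesis $n\ge6$ is used to absorb lower-order degree terms, such as $1/(n/2-1)\le 4/n$, into the constant $8$.

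The main obstacle is the exact pseudoinverse computation for the bridge edges. One must handle the differing degrees ($n/2$ for matched vertices versus $n/2-1$ for unmatched ones) and keep the constants sharp enough to reach exactly $\sqrt{2/n}$ and $8k/n^2$. If the exact solve becomes messy, the first fallback is to bound $\|x\|_2$ via $x=\bL^\dagger(\be_j-\be_k)$. This would use the spectral gap for the within-clique components and an effective-resistance argument (resistance $\le 1/k+4/n$) for the global component. If that is not sharp enough, the second fallback is \cref{lemma:hitting-time-bound} with the eigenvalues of this two-clique graph.
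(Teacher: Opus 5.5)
Your plan follows the paper's proof. You factor out $\sum_{(i,j)\in\mathcal E}\sqrt{\Phi_{ij}}\|\bU_{i\cdot}-\bU_{j\cdot}\|_2=k\|\bc_1-\bc_2\|_2$, bound $\|\bF^\dagger\bE\|_{\max}$ by the largest row norm of $\bF^\dagger$ through \cref{corollary:independent-F-dagger-E-max}, and compute those row norms exactly using the symmetry of two cliques joined by a $k$-matching. The only difference is how the row norms are computed. The paper goes through hitting times (\cref{lemma:F-dagger-form}) and solves a $9\times 9$ system symbolically; you solve $\bL x=\be_j-\be_k$, $x\perp\mathbf 1_n$, on orbit classes. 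Your version is much shorter than you expect. For a bridge $(a,a')$, the clique swap $i\mapsto i+n/2$ forces $x_{v'}=-x_v$. Write $m=n/2$ and $S$ for the sum of $x$ over the first clique. The equations are then $(m+2)x_a=1+S$, $(m+2)x_b=S$ at the other matched vertices, and $m x_u=S$ at unmatched ones. Hence $S=m/(2k)$ and
\[
\|x\|_2^2=\frac{n}{4k^2}+\frac{8(k-1)}{k(n+4)^2}\qquad(1\le k\le n/2),
\]
which is the paper's bridge formula.

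The constants in the statement are sharp, and your bookkeeping does not reach them as written.
\begin{itemize}
\item You need $c=\sqrt8$, as given by \cref{corollary:independent-F-dagger-E-max}, not the rounded $3$ of \cref{lemma:crude-bound}. The bridge norm is at least $\sqrt n/(2k)$, so $c=3$ gives a leading term $3/(2\sqrt n)>\sqrt{2/n}$. For fixed $k$ and large $n$ you would then prove something strictly weaker than the statement.
\item Your heuristic bridge norm $\sqrt{2n}/(2k)$ has a spurious $\sqrt2$: potentials $\pm 1/(2k)$ on $n$ vertices give $\sqrt n/(2k)$. Were your value right, even $c=\sqrt8$ would give $2/\sqrt n$.
\item The $8k/n^2$ term comes from the bridge norm, not from within-clique edges. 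With the exact formula and $\sqrt{a+b}\le\sqrt a+\sqrt b$, one gets $\sqrt8\,\frac{k}{n}\|x\|_2\le\sqrt{2/n}+\frac{8k}{n(n+4)}$, so it is the second-order part of the \emph{bridge} norm that produces it.
\item Within-clique rows only need to be dominated, and they are, since the concentration bound takes a maximum over edges. However, they are not all $O(1/n)$ as you claim. An edge joining a matched and an unmatched vertex of the same clique has squared row norm $\frac{1}{k^2 n}+O(n^{-2})$, exactly $(n+8)/n^2$ when $k=1$.
\end{itemize}
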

The assumptions of \cref{corollary:oracle-graph} are that $\bPhi$ contains two complete subgraphs whose edges correspond to nodes that share a centroid, and then $k$ bridges between these subgraphs (see \cref{fig:oracle-graph} for an example). These assumptions identify a mathematically advantageous affinity graph, which has an innate symmetry that allows us to consider a small number of hitting times, so that \cref{lemma:F-dagger-form} is easily applicable. Standard hitting time calculations are applied; under the independence of $\bPhi$ and $\bE$, \cref{corollary:independent-F-dagger-E-max} can be applied immediately. Recovery of cluster labels with graphs matching the structure defined in \cref{corollary:oracle-graph} was first shown via Theorem 5 of \cite{sunConvexClusteringModel2021}, and here we specifically show the solution to \cref{eqn:clusterpath} will converge to the underlying centroids. This advantageous graph allows up to a linear number of between-cluster edges, as opposed to the crude bound in \cref{lemma:crude-bound} which allowed only $o(\sqrt{n})$ between-cluster edges. In \cref{corollary:oracle-graph}, the connectedness within-cluster causes the number of between-cluster edges to be inconsequential and not change the rate of convergence. 

\begin{figure}[htbp]
        \centering
        \includegraphics[width=8cm]{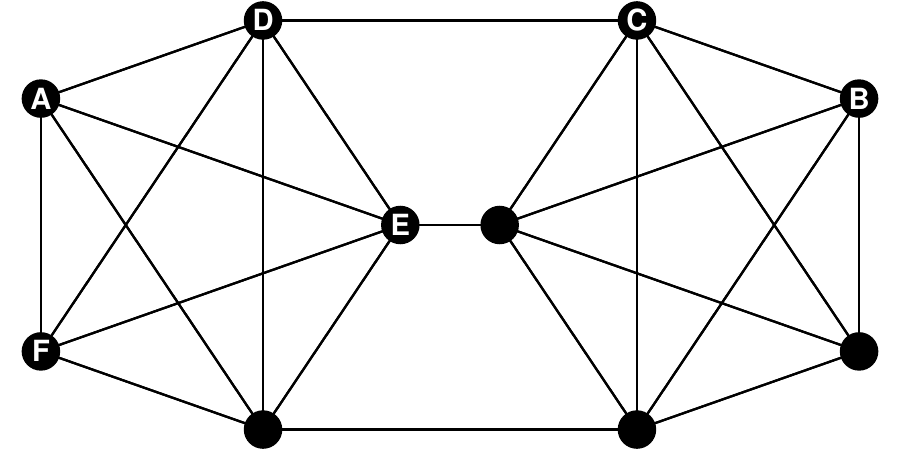}
        \caption{Graph with advantageous information, with complete subgraphs corresponding to within-cluster edges, $n = 10$ and $k = 3$ used in \cref{corollary:oracle-graph}. Due to symmetry, rows of $\bF^\dagger$ only need to be calculated for edges $(A \leftrightarrow F)$, $(A \leftrightarrow D)$, $(D \leftrightarrow E)$, and $(C \leftrightarrow D)$.}
        \label{fig:oracle-graph}
\end{figure}

The graph used for \cref{corollary:oracle-graph} is unrealistic, having complete connectivity within-clusters. A natural question is how dense $\bPhi$ should be within-cluster relative to the number of between-cluster edges to have consistency as $n$ grows. By representing $\bF^\dagger$ in terms of commute times we can appeal to concentration inequalities for commute times that are a consequence of the spectral properties mentioned in \cref{prop:luxberg}. For a wide range of graphs with specific rates on within and between-cluster connection we are able to analyze performance for convex clustering, asymptotically. Again, we assume $\bU$ has two unique clusters of equal size for simplicity, but the results should generalize with extra bookkeeping.

\begin{theorem}\label{thm:edge-crossings-bound}
    Assume the data-generating mechanism of Assumption \ref{assumption:data-generation} and suppose $\bU \in \mathbb R^{n \times p}$, has an even number of rows, where $\bU_{1\cdot},\ldots,\bU_{n/2\cdot} = \bc_1$ and $\bU_{n/2+1\cdot},\ldots,\bU_{n\cdot} = \bc_2$. Let $\bPhi \in \{0, 1\}^{n\times n}$ have zero diagonal and be symmetric, random and independent of $\bE$ such that each entry, $\Phi_{ij}$, $i < j$ is an independent Bernoulli random variable with probabilities
    \begin{displaymath}
        p_w = P(\Phi_{ij} = 1 \mid \bU_{i\cdot} = \bU_{j\cdot}), \qquad p_b = P(\Phi_{ij} = 1 \mid \bU_{i\cdot} \neq \bU_{j\cdot}).
    \end{displaymath}
    If $p_w = \omega\brk1{\frac{\log(n)}{n}}$, $p_b = \omega\brk1{\frac{\log(n)}{n}}$ and $\frac{np_b^2}{p_w + p_b} = \omega(\log(n))$, then 
    \begin{displaymath}
        \frac{\|\bF^\dagger \bE\|_{\max}}{n} \sum_{(i, j) \in \mathcal E} \sqrt{\Phi_{ij}} \|\bU_{i\cdot} - \bU_{j\cdot}\|_2 = O_p\brk3{\sqrt{\log(n)} \brk3{\frac{p_b}{p_w + p_b} + \frac{1}{\sqrt{n}(p_w + p_b)}}}.
    \end{displaymath}
    Hence, if $p_w = \omega\brk1{\sqrt{\frac{\log(n)}{n}}}$ and $p_w = \omega(p_b \sqrt{\log(n)})$, then the oracle term converges to zero in probability.
\end{theorem}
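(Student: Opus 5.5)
The plan is to split the oracle term into two random factors and bound each in probability: the edge sum $S=\sum_{(i,j)\in\mathcal E}\sqrt{\Phi_{ij}}\|\bU_{i\cdot}-\bU_{j\cdot}\|_2 = |\mathcal I|\,\|\bc_1-\bc_2\|_2$, and $\|\bF^\dagger\bE\|_{\max}$. For the first factor, $|\mathcal I|$ is a sum of $n^2/4$ independent Bernoulli$(p_b)$ variables. Since $n^2p_b\to\infty$, Chernoff gives $|\mathcal I|\le \tfrac{n^2p_b}{2}$ with probability tending to one. Hence $S/n = O_p(np_b)$, with $\|\bc_1-\bc_2\|_2$ treated as fixed.

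For the second factor, I would condition on $\bPhi$, which is independent of $\bE$, and apply \cref{corollary:independent-F-dagger-E-max}. Its proof should give $\|\bF^\dagger\bE\|_{\max}\lesssim \sigma_p\sqrt{\log(np)}\,\max_e\|\bF^\dagger_{e\cdot}\|_2$, because each entry $(\bF^\dagger\bE)_{e,l}=\sum_i \bF^\dagger_{e,i}E_{il}$ is sub-Gaussian with parameter $\|\bF^\dagger_{e\cdot}\|_2\sigma_p$. It therefore suffices to show
\begin{displaymath}
\max_e\|\bF^\dagger_{e\cdot}\|_2 = O_p\brk3{\frac{1}{n(p_w+p_b)}\brk3{\frac{p_b}{p_w+p_b}+\frac{1}{\sqrt n\,(p_w+p_b)}}\cdot\frac{1}{p_b}}.
\end{displaymath}
Multiplying this by $np_b$ recovers the claimed rate.

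To get this, I would use \cref{lemma:hitting-time-bound}. With probability tending to one the graph is connected and non-bipartite, since $p_w,p_b=\omega(\log n/n)$. Bernstein plus a union bound shows every degree satisfies $D_{ii}=\bar d(1+O(\sqrt{\log n/(n(p_w+p_b))}))$, where $\bar d=\tfrac n2(p_w+p_b)$. The row bound then has three pieces:
\begin{itemize}
\item the diagonal entries contribute about $1/\bar d$;
\item the off-diagonal degree-difference piece contributes $\sqrt n\cdot\frac1n|D_{kk}^{-1}-D_{jj}^{-1}|\lesssim \frac{1}{\sqrt n\,\bar d}\sqrt{\log n/(n(p_w+p_b))}$;
\item the hitting-time piece, via \cref{prop:luxberg}, contributes $\sqrt n\cdot\frac{2}{(1-\lambda_2)\bar d^2}$.
\end{itemize}
The spectral gap $1-\lambda_2$ is the main obstacle. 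For this two-block stochastic block model, the second eigenvalue of $\bP$ is governed by the block structure and is roughly $\frac{p_w-p_b}{p_w+p_b}$. The gap is therefore about $\frac{2p_b}{p_w+p_b}$, plus random fluctuations of order $1/\sqrt{n(p_w+p_b)}$ from matrix concentration for the normalized adjacency matrix (e.g. Oliveira / Chung–Radcliffe type bounds). The hypothesis $np_b^2/(p_w+p_b)=\omega(\log n)$ is exactly what makes the fluctuation negligible relative to $p_b/(p_w+p_b)$, so that $1-\lambda_2\gtrsim p_b/(p_w+p_b)$ with high probability. I would establish this first by comparing $\bD^{-1/2}\bPhi\bD^{-1/2}$ to its expectation in operator norm.

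Substituting these estimates, the hitting-time contribution becomes $\sqrt n\,\frac{p_w+p_b}{p_b\bar d^2}$, and multiplying by $np_b\sqrt{\log n}$ gives order $\sqrt{\log n}/(\sqrt n(p_w+p_b))$. The diagonal contribution $1/\bar d$, multiplied by $np_b$, gives order $p_b/(p_w+p_b)$. The degree-difference piece is dominated by these two. Some care is needed in the bookkeeping, possibly using the cruder per-entry $\ell_\infty$ bound in place of the $\ell_2$ norm, to match the stated exponents exactly.

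The final claim follows by inspection. The first term vanishes when $p_w=\omega(p_b\sqrt{\log n})$, and the second vanishes when $p_w=\omega(\sqrt{\log n/n})$.
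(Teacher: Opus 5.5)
Your proposal is correct and follows essentially the same route as the paper. The steps match: Chernoff for $|\mathcal I|$, then \cref{corollary:independent-F-dagger-E-max} conditionally on $\bPhi$, then \cref{lemma:hitting-time-bound} with \cref{prop:luxberg} to get $\max_e\|\bF^\dagger_{e\cdot}\|_2\lesssim 1/d_{\min}+\sqrt n\,(1-\lambda_2)^{-1}/d_{\min}^2$, a lower-tail bound on $d_{\min}$, and a Chung--Radcliffe comparison giving $(1-\lambda_2)^{-1}\lesssim 1+p_w/p_b$ under $np_b^2/(p_w+p_b)=\omega(\log n)$. Your two-sided degree concentration is unnecessary, since the paper simply uses $|D_{kk}^{-1}-D_{jj}^{-1}|\le 2/d_{\min}$.

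Two small slips to fix. First, your displayed target for $\max_e\|\bF^\dagger_{e\cdot}\|_2$ carries a spurious extra factor $1/(p_w+p_b)$; the right target is $\frac{1}{np_b}\bigl(\frac{p_b}{p_w+p_b}+\frac{1}{\sqrt n\,(p_w+p_b)}\bigr)$, which is what your later bookkeeping actually verifies. Second, the spectral fluctuation from Chung--Radcliffe/Oliveira is of order $\sqrt{\log n/(n(p_w+p_b))}$, not $1/\sqrt{n(p_w+p_b)}$, and that is exactly why the hypothesis on $np_b^2/(p_w+p_b)$ carries the $\log n$.
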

Connectivity will occur almost surely under the conditions of \cref{thm:edge-crossings-bound} (see Theorem 7.3 of \cite{bollobasRandomGraphs2001} and \cite{abbeCommunityDetectionStochastic2018} for further information), allowing application of \cref{thm:data-dependent-standard} for large $n$.
The proof of \cref{thm:edge-crossings-bound} then requires concentration inequalities for $\lambda_2$, $\min_{i} D_{ii}$, and the number of between-cluster edges under the growth model described. See \cref{appendix:thm:edge-crossings-bound} for details. 

\cref{thm:edge-crossings-bound} gives clear growth rates on the rate at which erroneous edges can increase while still guaranteeing consistency of convex clustering solutions. For example, a quadratic number of within-cluster edges, i.e. when $p_w$ is constant, allows for convergence provided $p_b = \omega(1 / n)$ and $p_b = o(p_w / \sqrt{\log(n)})$. 
To apply the spectral bounds from \cref{prop:luxberg}, a superlinear number of between-cluster edges is required, but as shown in \cref{lemma:crude-bound} (which can imply convergence if $p_b = o(n^{-3/2})$) and \cref{corollary:oracle-graph}, this is not necessary for convergence in general. It is intuitive that the number of within-cluster edges should be large enough relative to the between-cluster edges, and \cref{thm:edge-crossings-bound} provides an explicit rate describing when this dominance yields convergence. The result also implies that performance should improve the more $p_w$ dominates $p_b$.

\section{Empirical Results}
\label{sec:experiments}

We now conduct an extensive simulation study to better understand how properties of $\bPhi$ affect solutions to \cref{eqn:clusterpath} in practice. In particular, we examine how solutions to \cref{eqn:clusterpath} recover the true clusters in terms of both centroid and label recovery and how these qualities change as the oracle term from \cref{eqn:oracle-term} increases. Across the simulations, we examine a wide regime of connectedness in $\bPhi$, giving purview into how the oracle term changes as the affinity graph grows denser. Each trial consists of simulating a dataset, constructing many $\bPhi$ for fitting, and then calculating the entire solution path over $\gamma$ for each $\bPhi$\footnote{\texttt{R} code to produce these results is available at \url{https://github.com/SamGRosen/convex_cluster_experiments/}}. 

We consider two constructions for $\bPhi$. First, we take unweighted $k$-nearest neighbor graphs over the entire range of values for $k$, showing the performance of this method for both sparse and dense formulations. Next, we consider the explicit graph forms described in \cref{thm:edge-crossings-bound}, which we will refer to as sparse ``advantageous" graphs. This moniker comes from the graphs being sparse for low values of $p_b$ and $p_w$ and the ability to easily toggle how ``advantageous" a graph is for clustering via the ratio of within to between-cluster connectivity, i.e. $p_w$ vs. $p_b$. In our study we vary both $p_b$ and $p_w$ to be within $[0.02, 0.25]$ to examine how the within and between-cluster connectivity interact to change performance. An example dataset, the 4-nearest neighbor graph derived from it, and the solution path is shown in \cref{fig:trial-graph-both}.

\begin{figure}[htbp]
    \centering
    \subfloat[Input $\bPhi$ and $\bX$]{\label{fig:trial-example}\includegraphics[width=0.48\textwidth]{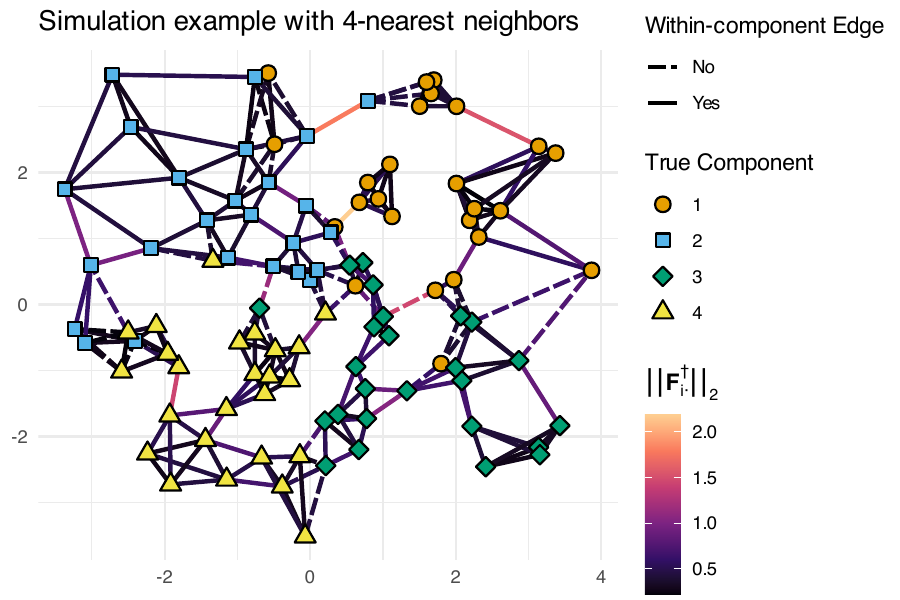}}
    \subfloat[Output solution path over $\gamma$]{\label{fig:trial-solution-path}\includegraphics[width=0.48\textwidth]{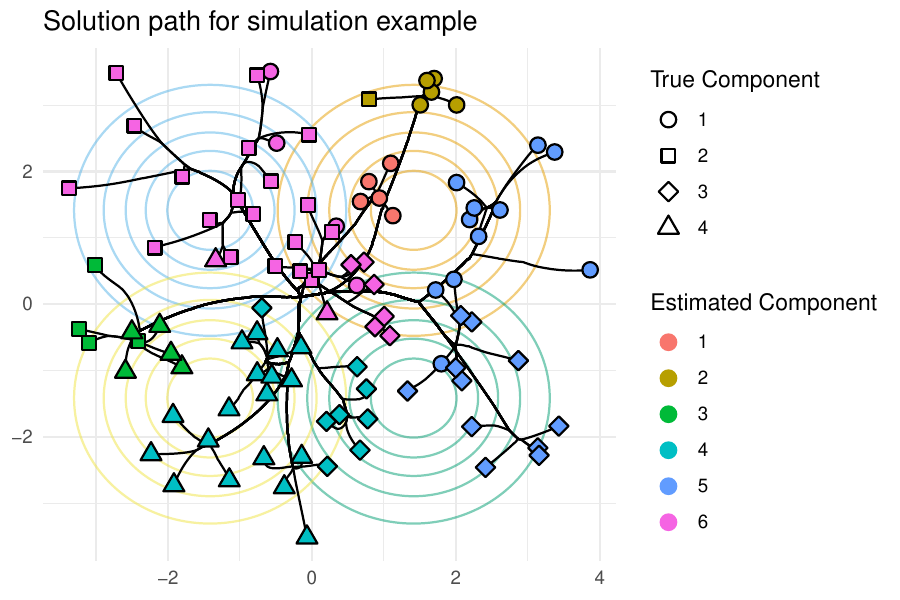}}
    \caption{(a) A 4-nearest neighbors affinity graph for 100 points (colored and shaped by true component) in $\mathbb R^2$ used for one of the trials. For each edge, color signifies $\|\bF^\dagger_{\mathcal E(j, k)\cdot}\|_2$ and dashed if the edge is between points from different clusters. (b) Convex clustering solution path with points colored by estimated component membership at $\gamma \approx 9$, and shaped by true component.}
    \label{fig:trial-graph-both}
\end{figure}

In exact terms, each trial consists of the following simulation steps:
\begin{enumerate}
    \item Generate $\bX = \bU + \bE \in \mathbb R^{100 \times 2}$ where $\bU_{i \cdot} \in \brk[c]3{\begin{bmatrix}
    \sqrt{2} \\ \sqrt{2}
\end{bmatrix}, \begin{bmatrix}
    -\sqrt{2} \\ \sqrt{2}
\end{bmatrix}, \begin{bmatrix}
    \sqrt{2} \\ -\sqrt{2}
\end{bmatrix}, \begin{bmatrix}
    -\sqrt{2} \\ -\sqrt{2}
\end{bmatrix}}$, each center appears 25 times, and entries of $\bE$ are independent standard normal variables.
    \item Construct a sequence of affinity graphs, $\bPhi^{(1)},\ldots,$ with one of the following methods:
    \begin{enumerate}[label=2\alph*., ref=2\alph*]
      \item \textbf{$k$-nearest neighbors:} For all $k \in \{2, \ldots, 99\}$ construct the unweighted $k$-nearest neighbor graph using $\bX$.
      \item \textbf{Sparse advantageous:} For all $(p_w, p_b) \in \{0.02, 0.03,\ldots, 0.25\}^2$, generate 10 samples of $\bPhi$ according to the mechanism of \cref{thm:edge-crossings-bound}.
    \end{enumerate}
    \item Calculate the oracle term from \cref{eqn:oracle-term} for all $\bPhi^{(\cdot)}$.
    \item Let $\hat \bU(\gamma)$ be the solution to \cref{eqn:clusterpath} with hyperparameter $\gamma$. For each $\bPhi^{(\cdot)}$, calculate $\hat \bU(\gamma)$, over a grid of $\gamma$ such that the largest $\gamma$ is large enough for $\hat \bU(\gamma)$ to contain one unique row. 
    \item For each $\bPhi^{(\cdot)}$, record which $\hat \bU(\gamma)$ maximizes the ARI when compared with the true labels. In addition, record which $\hat \bU(\gamma)$ that minimizes $\|\bU - \hat \bU(\gamma)\|_F^2$.
\end{enumerate}
For the sparse advantageous graphs, we perform 400 trials, resulting in a total of 2,304,000 measurements of $\bPhi$, and record the corresponding best ARI and mean squared error. For the $k$-nearest neighbor graphs, we perform 10,000 trials, resulting in a total of 980,000 measurements. 

For each calculated solution to \cref{eqn:clusterpath}, we calculate the mean squared error with the true clusters, and the adjusted Rand index (ARI) of the estimated clusters from the fit. All solutions are calculated with the \texttt{CCMMR} \texttt{R} package \cite{CCMMR}, an implementation of the methods in \cite{touwConvexClusteringMM2023}. ARI is a measure of clustering quality which controls for incidental matching. ARI is between zero, representing a matching similar to randomly guessing, and one, representing a perfect recovery of clusters. We summarize the trials for each affinity graph construction with a heatmap. The heatmaps of \cref{fig:knn-experiments-both} and \cref{fig:sparse-oracle-experiments-both} contain cells that represent the number of affinity graphs that have an oracle term in that bin, along with the best mean squared error or ARI from the solution path. For visualization purposes, both \cref{fig:knn-experiments-oracle-mse} and \cref{fig:sparse-oracle-experiments-mse} discard the trials with an oracle term in the top 0.5\% quantile.

For the $k$-nearest neighbor graphs, \cref{fig:knn-experiments-oracle-mse} shows a clear correlation between mean squared error---the left-hand side of \cref{eqn:thm-bound}---and the oracle term. In general, the best fits have a small oracle term, representing a small number of between-cluster connections, while maintaining high connectivity overall. The performance with respect to $k$ is shown in \cref{fig:knn-experiments-metrics}. If $k$ is too small, the oracle term becomes large due to poor connectivity of the graph, in turn leading to both the mean squared error and ARI showing poor performance. We observed that peak performance both in terms of mean squared error as well as ARI occurs at roughly $k\approx 20$, coinciding with a dip in the oracle term. As $k$ becomes too large, too many between-cluster edges increase the oracle term, and both mean squared error and ARI drop again rapidly. Because the number of between-cluster connections is increasing with $k$, we can conclude that the range of values $k$ for which the oracle term decreases, coincides with the $\|\bF^\dagger \bE\|_{\max}$ term decreasing faster. These empirical results illustrate not only how the theory we derive is consistent with what we observe in practice, but also suggests that by tuning $k$ appropriately, one can maximize the performance of convex clustering.

\begin{figure}[htbp]
    \centering
    \subfloat[]{\label{fig:knn-experiments-oracle-mse}\includegraphics[width=0.52\textwidth]{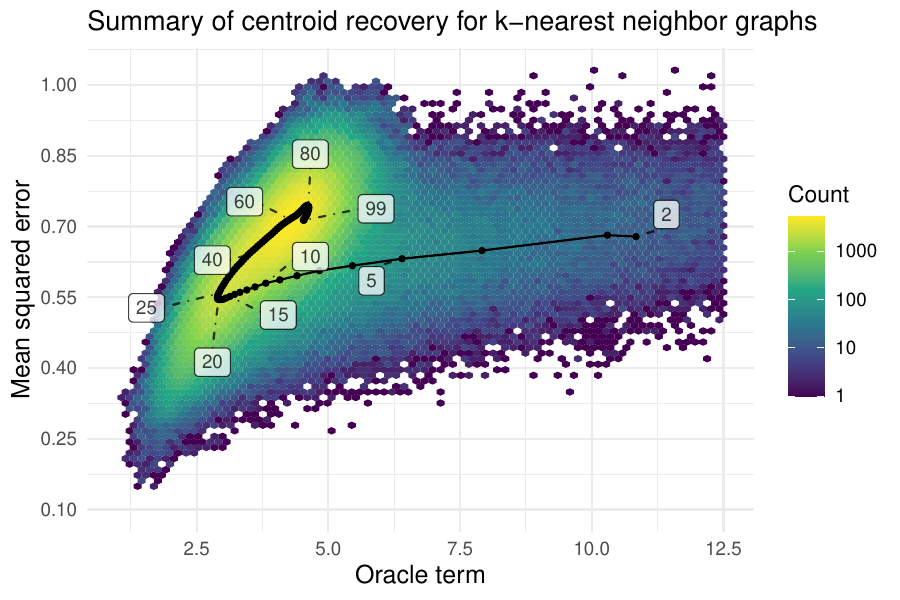}}
    \subfloat[]{\label{fig:knn-experiments-metrics}\includegraphics[width=0.43\textwidth]{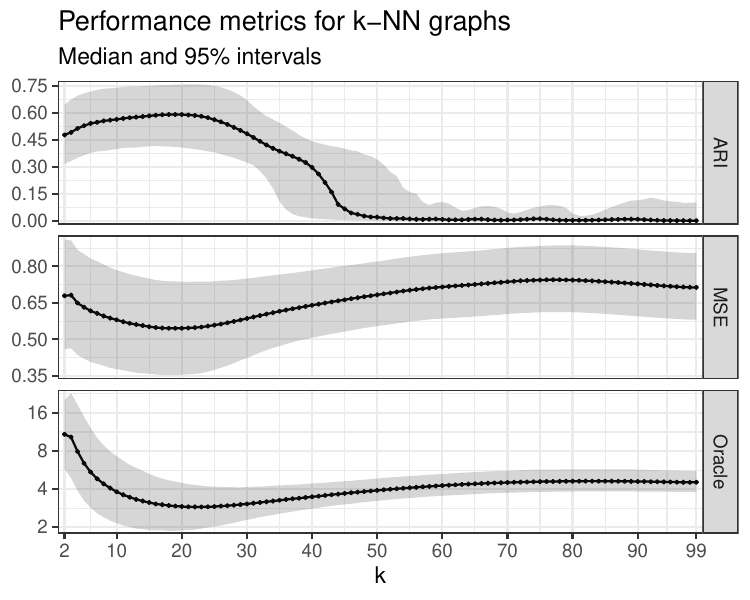}}
    \caption{ (a) For $k$-nearest neighbor trials, heatmap of the oracle term from step 3 and the mean squared error. (b) Median metrics for constructions with $k$-nearest neighbors, in terms of ARI, mean squared error, and the oracle term. Confidence bands include 95\% of observations. The median MSE and ARI values from (b) are superimposed and labeled on (a).}
    \label{fig:knn-experiments-both}
\end{figure}

For the sparse advantageous graphs, \cref{fig:sparse-oracle-experiments-mse} shows a similar story to \cref{fig:knn-experiments-oracle-mse}. As the oracle term grows, the general performance via mean squared error decreases. This matches the intuition behind \cref{thm:data-dependent-standard}. Although this theory was only a worst-case bound and not necessarily tight, there is still empirical evidence of a strong relationship between the oracle term and the quality of solutions to \cref{eqn:clusterpath}. In addition, the oracle term maintains a mostly monotonic correlation with ARI, as shown in \cref{fig:sparse-oracle-experiments-ari}. A small oracle term corresponds to a perfect recovery of clusters, while a large oracle typically has an ARI near zero. As evidenced by \cref{thm:data-dependent-standard}, the oracle term is relevant even for graphs dependent on the data and we show it maintains relevance empirically for $k$-nearest neighbor graphs. \cref{thm:edge-crossings-bound} was also proven through bounds on \cref{eqn:oracle-term}, and we show that the oracle term properly measures performance for the wide balance of within and between-cluster connectivity afforded by the theorem.

\begin{figure}[htbp]
    \centering
    \subfloat[]{\label{fig:sparse-oracle-experiments-mse}\includegraphics[width=0.48\textwidth]{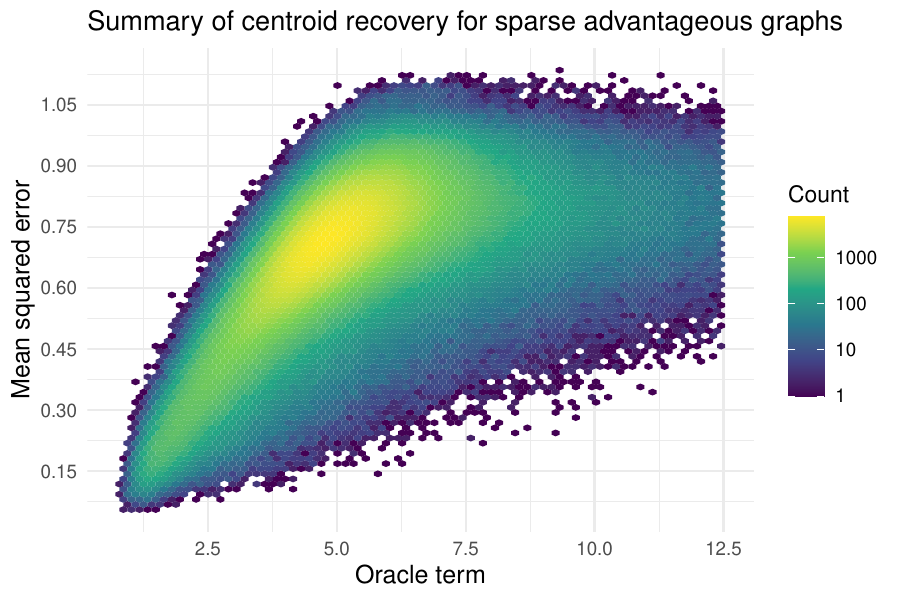}}
    \subfloat[]{\label{fig:sparse-oracle-experiments-ari}\includegraphics[width=0.48\textwidth]{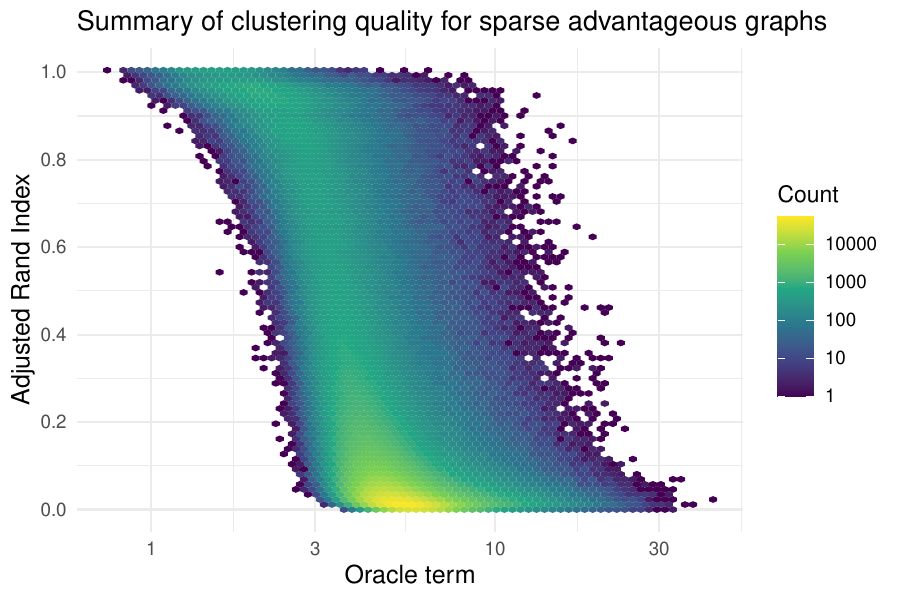}}

    \caption{(a) For sparse advantageous trials, heatmap of the oracle term from step 3 and the mean squared error. (b) Similar to (a), but with the adjusted Rand index measured.}
    \label{fig:sparse-oracle-experiments-both}
\end{figure}

\section{Discussion and Future Directions}
\label{sec:conclusions}
This article has contributed a framework for understanding the theoretical performance of convex clustering as it relates to the affinity weights appearing in the objective \eqref{eqn:clusterpath}. In particular, we find that the connectivity of the underlying graph is a key relevant quantity, which is formally related to clustering performance by bridging analytical techniques from the convex clustering literature to results for random graphs and commute times of random walks. There is extensive literature on both these topics \cite{devriendtEffective, ellensEffectiveGraphResistance2011, ghoshMinimizingEffectiveResistance2008, maierClusterIdentificationNearestNeighbor2007, maierInfluenceGraphConstruction2008, maierHowResultGraph2013} and further application may lead to tighter bounds or extensions of these results to related settings.

These contributions lead to a generalization of finite-sample error bounds to settings that closely align with how convex clustering and its variants are used in practice. At a high level, our theoretical results now match the intuition of convex clustering, while previous results under uniform affinity weights effected a disconnect with the empirical trends observed and well-documented by practitioners. The core quantities identified in our analysis $\bF^\dagger \bE$ and $\sum \sqrt{\Phi_{ij}} \|\bU_{i\cdot} - \bU_{j\cdot}\|_2$ quantify the intuition that high within-cluster connectivity and sparse connectivity between-clusters leads to better performance. For example, \cref{thm:edge-crossings-bound} implies that convex clustering can remain consistent even in the presence of many between-cluster edges, provided that within-cluster connectivity is sufficiently high. The existing analyses in the convex clustering literature assume fully-connected, uniform affinities  \cite{chakrabortyBiconvexClustering2023, chiProvableConvexCoclustering2020, wangSparseConvexClustering2018} and require $p$ to grow with $n$ or specific qualities of the data-generating mechanism to establish consistency. The theoretical framework here could be applied to these works, generalizing their theorems while giving insight into the effect of affinities in these different settings.

In practice, the choice of $\bPhi$ is not independent of $\bE$, and it may be possible to strengthen our results to apply in this case by leveraging dependence structure. Our current work exploits concentration inequalities for $\bF^\dagger$ and $\bE$ separately, and it is not clear how to bound the entries of $\bF^\dagger \bE$ when $\bF^\dagger$ is correlated with $\bE$ through $\bPhi$ being a function of the data. By using results from \cite{luxburgHittingCommuteTimes2014} for common constructions of data-dependent $\bPhi$ and applying the two sets of concentration inequalities with a union bound, the oracle term becomes bounded by a constant with high probability, similar to the case for fully uniform affinities. Some additional theoretical works that may be helpful to bridge this gap include convergence of the spectrum of the Laplacian matrix for $k$-nearest neighbor and $\epsilon$-radius graphs \cite{calderImprovedSpectralConvergence2022} and a variety of results related to the number of between-cluster edges, or cuts, for random graphs constructed from data \cite{maierHowResultGraph2013, narayananRelationLowDensity2006, trillosGraphCutsIsoperimetric2022}.

We find that to acquire the best possible clustering from a fixed dataset, one should tune over a range of affinity graphs along with tuning over $\gamma$. Compared to tuning over $\gamma$ with a fixed $\bPhi$, it may be equally as valid to tune over $\bPhi$ with fixed $\gamma$. Existing consistency results for $\bPhi$ depending on the data from \cite{dunlapLocalVersionsSumofNorms2022} suggest this, which is now further supported by the trade-off implied by \cref{eqn:oracle-term}:  additional between-cluster edges could be offset by gains in within-cluster connectivity. In \cref{sec:experiments}, we find empirically that the solution quality is highly dependent on the structure of $\bPhi$ for any given $\bX$. Existing algorithms for calculating the solution path over $\gamma$ for \cref{eqn:clusterpath} with fixed $\bPhi$ benefit from using the previous solution as a warm start to efficiently calculate the solution path. A similar idea could be used to calculate the optimal solution for increasingly dense values of $\bPhi$ with $\gamma$ fixed. Better yet, a state-of-the-art convex clustering package would calculate the solution path over $\gamma$ and $\bPhi$ jointly, giving a wide range of high quality solutions to choose from. Approaching \cref{eqn:clusterpath} as a parametric optimization problem of $\bPhi$ could lead to fast optimization over this grid of hyperparameters \cite{bonnansPerturbationAnalysisOptimization2000}.

\newpage
\appendix

\section{Proofs of main results}

\subsection{Proof of Theorem \ref{thm:data-dependent-standard}}

We show a corollary of the Hanson-Wright inequality and a property of the graph incidence matrix before proceeding with the proof of \cref{thm:data-dependent-standard} which has many identical steps to the proofs in \cite{tanStatisticalPropertiesConvex2015}. 
\begin{lemma}\label{lemma:modified-hanson-wright}
    Suppose $\bE \in \mathbb R^{n\times p}$ where rows $\bE_{i\cdot}$ are independent, mean-zero and satisfy $\|\bE_{i\cdot}\|_{\psi_2} = \sigma_p$. Then
    \begin{equation*}
        P\brk3{\vect(\bE)^\top \brk3{\bI_p \otimes \frac{\mathbf 1_n \mathbf 1_n^\top}{n}} \vect(\bE) \geq C \sigma^2_p(p + \log(np))} \leq \frac{1}{np},
    \end{equation*}
    where $C > 0$ is some universal constant.
\end{lemma}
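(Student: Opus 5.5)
The quadratic form is simple to rewrite directly. Since $\bI_p \otimes \frac{\mathbf 1_n \mathbf 1_n^\top}{n}$ acts column by column on $\vect(\bE)$,
\begin{displaymath}
    \vect(\bE)^\top \brk3{\bI_p \otimes \frac{\mathbf 1_n \mathbf 1_n^\top}{n}} \vect(\bE) = \frac{1}{n}\sum_{l=1}^p \brk3{\sum_{i=1}^n E_{il}}^2 = \|\bZ\|_2^2, \qquad \bZ = \frac{1}{\sqrt n}\sum_{i=1}^n \bE_{i\cdot}^\top \in \mathbb R^p.
\end{displaymath}
So the plan is to control the squared norm of a normalized sum of independent sub-Gaussian vectors. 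I would rather not apply Hanson--Wright with the full $np \times np$ matrix. That route would need independent coordinates, but our assumption only gives independent rows, and the entries within a row may be dependent.

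\textbf{Step 1.} Show that $\bZ$ is a sub-Gaussian vector with $\|\bZ\|_{\psi_2} \le c\sigma_p$. For a fixed unit $\bv$, the scalars $\bv^\top \bE_{i\cdot}$ are independent, mean-zero, and have $\psi_2$-norm at most $\sigma_p$. By the sub-Gaussian Hoeffding bound (Proposition 2.6.1 in \cite{vershyninHighDimensionalProbabilityIntroduction2026}),
\begin{displaymath}
    \|\bv^\top \bZ\|_{\psi_2}^2 \le \frac{C}{n}\sum_i \|\bv^\top \bE_{i\cdot}\|_{\psi_2}^2 \le C\sigma_p^2.
\end{displaymath}
Taking the supremum over $\bv$ gives the claim.

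\textbf{Step 2.} Bound the norm of a sub-Gaussian vector with a tail estimate, using a $1/2$-net $\mathcal N$ of the sphere $S^{p-1}$ with $|\mathcal N| \le 5^p$. The standard argument gives $\|\bZ\|_2 \le 2\max_{\bv \in \mathcal N} \bv^\top \bZ$. A union bound then yields
\begin{displaymath}
    P\brk1{\|\bZ\|_2 \ge t} \le 5^p \cdot 2\exp\brk1{-c t^2/\sigma_p^2}.
\end{displaymath}

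\textbf{Step 3.} Choose $t^2 = C\sigma_p^2(p + \log(np))$ with $C$ large enough that $c\,t^2/\sigma_p^2 \ge p\log 5 + \log 2 + \log(np)$. Then the probability is at most $1/(np)$, which gives the stated bound with a universal constant.

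The only real obstacle is Step 1, namely the fact that the entries inside a row can be dependent. Working through one-dimensional projections and the row-wise independence handles this cleanly. Everything else is the routine net argument. An alternative would be to apply Hanson--Wright to the rank-$p$ projection, but that route needs independent entries, so I would avoid it.
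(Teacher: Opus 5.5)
Your proof is correct, and it takes a more elementary route than the paper. The paper works with the full vector $\vect(\bE) \in \mathbb R^{np}$. It uses the same Hoeffding step as your Step 1, but over all $np$ coordinates, to get $\|\vect(\bE)\|_{\psi_2}^2 \le C\sigma_p^2$. It then invokes a black-box tail bound for quadratic forms of sub-Gaussian random vectors with a PSD matrix (the Exercise 6.11 it cites from Vershynin, a Hsu--Kakade--Zhang type bound), using only that $\bI_p \otimes \frac{\mathbf 1_n \mathbf 1_n^\top}{n}$ is a PSD projection with trace $p$ and operator norm $1$. You instead note that the projection factors through $\mathbb R^p$, with $\bZ = (\bI_p \otimes \mathbf 1_n^\top/\sqrt n)\vect(\bE)$, so the quadratic form is exactly $\|\bZ\|_2^2$ for a $p$-dimensional sub-Gaussian vector. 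You then bound $\|\bZ\|_2$ directly with a $1/2$-net and a union bound. In effect you prove, from Hoeffding and a net argument alone, the special case of the cited quadratic-form bound that is needed here, so your argument is self-contained. The paper's version is shorter by citation and would extend unchanged to any PSD matrix, with the bound in terms of its trace and operator norm. One correction to your closing remark: the Hanson--Wright-type route through the rank-$p$ projection does not need independent entries if you use the sub-Gaussian-vector version rather than the classical coordinatewise one. That is the version the paper uses. Its only input is the vector $\psi_2$-norm bound, which holds because the rows are independent.
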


\begin{proof}
    We first calculate the sub-Gaussian norm of $\bE$. By definition, 
    \begin{displaymath}
        \|\vect(\bE)\|_{\psi_2}^2 = \sup_{\|\bv\|_2 = 1} \|\bv^\top \vect(\bE)\|_{\psi_2}^2 = \sup_{\|\bV\|_F = 1} \bigg\|\sum_{i=1}^n \sum_{j=1}^p E_{ij} V_{ij} \bigg\|_{\psi_2}^2.
    \end{displaymath}
    
    Via Proposition 2.7.1 of \cite{vershyninHighDimensionalProbabilityIntroduction2026}, since $X_i = \sum_{j = 1}^p E_{ij} V_{ij}$ is independent of $X_k$, $i\neq k$,  
    \begin{align*}
        \sup_{\|\bV\|_F = 1} \bigg\|\sum_{i=1}^n \sum_{j=1}^p E_{ij} V_{ij} \bigg\|_{\psi_2}^2 & \leq \sup_{\|\bV\|_F = 1} C \sum_{i=1}^n \bigg\|\sum_{j=1}^p E_{ij} V_{ij} \bigg\|_{\psi_2}^2, \\
        & = \sup_{\|\bV\|_F = 1} C \sum_{i=1}^n \|\bV_{i\cdot}\|_2^2 \bigg\|\bE_{i\cdot}^\top \frac{\bV_{i \cdot}}{\|\bV_{i\cdot}\|_2} \bigg\|_{\psi_2}^2. \\
        & \leq \sup_{\|\bV\|_F = 1} C \sum_{i=1}^n \|\bV_{i\cdot}\|_2^2 \sup_{\|\bv\|_2 = 1} \|\bE_{i\cdot}^\top \bv \|_{\psi_2}^2.
    \end{align*}
    By definition, $\sup_{\|\bv\|_2 = 1} \|\bE_{i\cdot}^\top \bv \|_{\psi_2}^2 = \sigma_p$, and $\|\bV\|_F^2 = \sum_{i=1}^n \|\bV_{i\cdot}\|_2^2 = 1$, so
    \begin{align*}
        \|\vect(\bE)\|_{\psi_2}^2    \leq C \sigma_p^2.    
    \end{align*}
    As a projection matrix of rank $p$, $\bI_p \otimes \frac{\mathbf 1_n \mathbf 1_n^\top}{n}$ has trace $p$ and unit operator norm. Furthermore, it is positive semidefinite. Hence, we can apply Exercise 6.11 of \cite{vershyninHighDimensionalProbabilityIntroduction2026} for
    \begin{displaymath}
        P\brk3{\vect(\bE)^\top \brk3{\bI_p \otimes \frac{\mathbf 1_n \mathbf 1_n^\top}{n}} \vect(\bE) \geq C \sigma^2_p (p + t)} \leq \exp(-t).
    \end{displaymath}
    Choose $t = \log(np)$ to complete the proof.
\end{proof}

\begin{lemma}[Affinity Graph Incidence Matrix] \label{lemma:incidence-null-space}
    Let $\bF \in \mathbb R^{n \times |\mathcal E|}$ be an oriented edge-incidence matrix for a graph $G = (V, \mathcal E)$, with $n$ nodes. If $G$ is connected, then the projection matrix onto the null space of $\bF^\top$ is $\frac{\mathbf 1_n \mathbf 1_n^\top}{n}$.
\end{lemma}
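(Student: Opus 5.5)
The plan is to identify the null space of $\bF^\top$ exactly as $\operatorname{span}(\mathbf 1_n)$. The projection formula then follows immediately, since the orthogonal projection onto the span of a single vector $\bv$ is $\bv\bv^\top/\|\bv\|_2^2$, and here $\|\mathbf 1_n\|_2^2 = n$.

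First, $\mathbf 1_n$ lies in the null space. Each column of $\bF$ corresponds to an edge $(j,k)$ and has exactly two nonzero entries, $\sqrt{\Phi_{jk}}$ and $-\sqrt{\Phi_{jk}}$. Every column therefore sums to zero, so $\bF^\top \mathbf 1_n = \bzero$.

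Second, the null space is no larger than this span. Suppose $\bF^\top \bv = \bzero$. For each edge $(j,k)\in\mathcal E$ we have $\sqrt{\Phi_{jk}}(v_j - v_k) = 0$, and since $\Phi_{jk}>0$ on edges, $v_j = v_k$. Equivalently, one can note $\bv^\top \bL \bv = \|\bF^\top\bv\|_2^2 = \sum_{(j,k)\in\mathcal E}\Phi_{jk}(v_j-v_k)^2 = 0$. Because $G$ is connected, any two nodes $i,i'$ are joined by a path $i=i_0,i_1,\ldots,i_r=i'$ of edges. Propagating the equality along this path gives $v_i = v_{i'}$, so $\bv$ is constant, that is, $\bv\in\operatorname{span}(\mathbf 1_n)$.

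Combining the two steps, $\operatorname{null}(\bF^\top) = \operatorname{span}(\mathbf 1_n)$, and the projection onto it is $\mathbf 1_n\mathbf 1_n^\top/n$. No step is a real obstacle. The only point needing care is using strict positivity of the edge weights together with connectivity in the path argument; both are guaranteed by \cref{def:affinity-graph}.
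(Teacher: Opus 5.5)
Your proof is correct and has the same structure as the paper's: show $\bF^\top \mathbf 1_n = \bzero$, show the null space of $\bF^\top$ is one-dimensional, and write down the rank-one projection $\mathbf 1_n \mathbf 1_n^\top / n$. The only difference is in the middle step: the paper cites a standard theorem that $\bF$ has rank $n-1$ when $G$ is connected, while your path-propagation argument proves one-dimensionality directly and handles the weighted entries $\sqrt{\Phi_{jk}}>0$ explicitly.
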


\begin{proof}
    The sums of all rows of $\bF^\top$ are zero, i.e. $\bF^\top \mathbf 1_n = \bzero$. Since $G$ is connected, $\bF^\top$ has rank $n-1$ (Theorem 9.6 of \cite{deoGraphTheoryApplications1974}) and the null space of $\bF^\top$ has dimension $1$ with $\mathbf 1_n / \sqrt{n}$ as the only basis unit vector. The projection matrix onto this space is $\frac{\mathbf 1_n \mathbf 1_n^\top}{n}$.
\end{proof}

The above proof can also be easily generalized to $G$ having multiple connected components. We now proceed with the proof of \cref{thm:data-dependent-standard}.

\begin{proof}
Let $\mathcal E = \{(i, j) \mid \Phi_{ij} > 0, i < j\}$ be the set of all edges for the row affinities. Let $\bF \in \mathbb R^{n \times |\mathcal E|}$ be the oriented edge-incidence matrix of $\bPhi$ such that $[\bF^\top \bU]_{\mathcal E(i,j)\cdot} = \sqrt{\Phi_{ij}}(\bU_{i\cdot} - \bU_{j\cdot})$ if $\Phi_{ij} > 0$ where $\mathcal E(i, j)$ is also an index set to map an edge to the row it corresponds to in $\bF^\top$. Since $G$ is connected, we know we know $\bF$ has rank $n-1$. Let $\bF^\top = \bQ \bLambda \bV_{\beta}^\top$ be the singular value decomposition of $\bF^\top$. Explicitly, $\bQ \in \mathbb R^{|\mathcal E| \times (n-1)}$, $\bLambda \in \mathbb R^{(n-1)\times (n-1)}$ and $\bV_\beta \in \mathbb R^{n \times (n-1)}$. Via properties of the SVD, we can write $\bV = [\bV_\alpha, \bV_\beta] \in \mathbb R^{n \times n}$ where $\bV_\alpha$ and $\bV_\beta$ are orthonormal matrices such that $\bV_\alpha^\top \bV_\beta = \bzero$. Let the following symbols be defined
\begin{displaymath}
    \bB = \bV_\beta^\top \bU, \quad
    \bA = \bV_\alpha^\top \bU, \quad
    \bZ = \bQ \bLambda.
\end{displaymath}
These symbols imply
\begin{displaymath}
    \bU = \bV_\beta \bB + \bV_\alpha \bA, \quad
    \bF^\top \bU = \bQ \bLambda \bV_\beta^\top \bU = \bZ \bB.
\end{displaymath}

Let $\gamma' = \frac{\gamma}{np}$. Then objective \cref{eqn:clusterpath} is equivalent to
\begin{equation}
    G(\bA, \bB) = \frac{1}{2np} \|\bX - \bV_\alpha \bA - \bV_\beta \bB\|_F^2 + \gamma' \sum_{(i, j) \in \mathcal E} \|\bZ_{\mathcal E(i, j) \cdot } \bB\|_2.
    \label{eqn:equiv_obj}
\end{equation}

Let $\hat \bU$ be the global minimizer of \cref{eqn:clusterpath}. Then by definition
\begin{equation}
    \begin{split}
    & \frac{1}{2np} \|\hat \bU - \bX\|_F^2 + \gamma' \sum_{(i, j) \in \mathcal E} \sqrt{\Phi_{ij}} \|\hat \bU_{i\cdot} - \hat \bU_{j\cdot}\|_2 \leq \frac{1}{2np} \|\bU - \bX\|_F^2 + \gamma' \sum_{(i, j) \in \mathcal E} \sqrt{\Phi_{ij}}\| \bU_{i\cdot} - \bU_{j\cdot}\|_2.
    \end{split} \label{eqn:minima_diff}
\end{equation}

Recall $\bX = \bU + \bE$ and substitute into \cref{eqn:minima_diff}
\begin{equation*}
    \begin{split}
    & \frac{1}{2np} \|\bU + \bE - \hat \bU\|_F^2 + \gamma' \sum_{(i, j) \in \mathcal E} \sqrt{\Phi_{ij}} \|\hat \bU_{i\cdot} - \hat \bU_{j\cdot}\|_2 \leq \frac{1}{2np} \|\bE\|_F^2 + \gamma' \sum_{(i, j) \in \mathcal E} \sqrt{\Phi_{ij}} \| \bU_{i\cdot} - \bU_{j\cdot}\|_2.
    \end{split}
\end{equation*}

Now expand all norms,
\begin{align*}
    \begin{split}
    & \frac{1}{2np} \sum_{\ell=1}^p \|\bU_{\cdot \ell}- \hat \bU_{\cdot \ell} \|_2^2 + \|\bE_{\cdot \ell}\|_2^2 + 2 \bE_{\cdot \ell}^\top (\bU_{\cdot \ell} - \hat \bU_{\cdot \ell}) + \gamma' \sum_{(i, j) \in \mathcal E} \sqrt{\Phi_{ij}} \|\hat \bU_{i\cdot} - \hat \bU_{j\cdot}\|_2  \\
    & \leq \frac{1}{2np} \sum_{\ell=1}^p \|\bE_{\cdot \ell}\|_2^2 + \gamma' \sum_{(i, j) \in \mathcal E} \sqrt{\Phi_{ij}} \| \bU_{i\cdot} - \bU_{j\cdot}\|_2,
    \end{split}
\end{align*}
and collect like terms,
\begin{align}
    \begin{split}
    & \frac{1}{2np} \sum_{\ell=1}^p \|\bU_{\cdot \ell}- \hat \bU_{\cdot \ell} \|_2^2 + \gamma' \sum_{(i, j) \in \mathcal E} \sqrt{\Phi_{ij}} \|\hat \bU_{i\cdot} - \hat \bU_{j\cdot}\|_2  \\
    & \leq \frac{1}{np} \sum_{\ell=1}^p \bE_{\cdot \ell}^\top (\hat \bU_{\cdot \ell} - \bU_{\cdot \ell}) + \gamma' \sum_{(i, j) \in \mathcal E} \sqrt{\Phi_{ij}} \| \bU_{i\cdot} - \bU_{j\cdot}\|_2.
    \end{split} \label{eqn:inequality_to_bound}
\end{align}

Let $\hat \bA, \hat \bB$ be the argument minima of $G$ from $\cref{eqn:equiv_obj}$ so $\hat \bU = \bV_\alpha \hat \bA + \bV_\beta \hat \bB$. We can rewrite \cref{eqn:inequality_to_bound} as
\begin{equation}
    \begin{split}
    & \frac{1}{2np} \|\bV_\alpha ( \bA - \hat \bA) + \bV_\beta ( \bB - \hat \bB) \|^2_{F} + \gamma' \sum_{(i, j) \in \mathcal E} \|\bZ_{\mathcal E(i, j) \cdot } \hat \bB\|_2 \\
    & \leq \frac{1}{np} \sum_{\ell = 1}^p \bE_{\cdot \ell}^\top([\bV_\alpha ( \hat \bA -  \bA)]_{\cdot \ell} + [\bV_\beta (\hat \bB - \bB)]_{\cdot \ell})  + \gamma' \sum_{(i, j) \in \mathcal E} \|\bZ_{\mathcal E(i, j) \cdot } \bB\|_2, \\ 
    & = \frac{1}{np} \operatorname{trace}(\bE^\top \bV_{\alpha} (\hat \bA - \bA)) +  \operatorname{trace}(\bE \bV_{\beta} (\hat \bB - \bB)) + \gamma' \sum_{(i, j) \in \mathcal E} \|\bZ_{\mathcal E(i, j) \cdot } \bB\|_2.
    \end{split} \label{eqn:inequality_rewritten}
\end{equation}
To be a minima, the gradient with respect to $\bA$ of \cref{eqn:equiv_obj} must equal 0 at $\hat \bA$,
\begin{displaymath}
    0 = \nabla_{\bA} \brk3{ \frac{1}{2np} \|\bX - \bV_\alpha \bA - \bV_\beta \bB \|^2_{F}} = \frac{1}{np} \bV_\alpha^\top ( \bX - \bV_\alpha \bA - \bV_\beta \bB).
\end{displaymath}
Since $\bV_\alpha^\top \bV_{\alpha} = 1$ and $\bV_{\beta}^\top \bV_{\alpha} = 0$, $\hat \bA$ satisfies 
\begin{align}
    \hat \bA & = \bV_\alpha^\top \bX = \bV_{\alpha}^\top(\bU + \bE) = \bV_{\alpha}^\top(\bV_{\alpha} \bA + \bV_{\beta} \bB + \bE) = \bA + \bV_{\alpha} \bE \implies \nonumber \\
    \bV_{\alpha}(\hat \bA - \bA) & = \bV_{\alpha} \bV_{\alpha}^\top \bE. \label{eqn:alpha-diff-sub}
\end{align}

We simplify the right-hand side of \cref{eqn:inequality_rewritten} with \cref{eqn:alpha-diff-sub} as
\begin{equation}
    \begin{split}
    & \frac{1}{np} \operatorname{trace}(\bE^\top \bV_{\alpha} (\hat \bA - \bA)) +  \operatorname{trace}(\bE \bV_{\beta} (\hat \bB - \bB)) + \gamma' \sum_{(i, j) \in \mathcal E} \|\bZ_{\mathcal E(i, j) \cdot } \bB\|_2 \\
    & = \frac{1}{np} \underbrace{\operatorname{trace}(\bE^\top \bV_{\alpha} \bV_{\alpha}^\top \bE)}_{(I)} + \frac{1}{np}\underbrace{\operatorname{trace}(\bE \bV_{\beta} (\hat \bB - \bB))}_{(II)} + \gamma' \sum_{(i, j) \in \mathcal E} \|\bZ_{\mathcal E(i, j) \cdot} \bB\|_2. \label{eqn:inequality_simplified}
    \end{split}
\end{equation}

Via properties of the SVD, $\bV_\alpha \bV_\alpha^\top$ is a projection matrix onto the null space of $\bF^\top$. For bounding $(I)$, via \cref{lemma:incidence-null-space}, we have that $\bV_\alpha \bV_\alpha^\top$ is not dependent on $G$ provided it is connected and 
\begin{equation*}
    \operatorname{trace}(\bE^\top \bV_{\alpha} \bV_{\alpha}^\top \bE) = \frac{1}{n}\operatorname{trace}(\bE^\top \mathbf 1_n \mathbf 1_n^\top \bE) = \vect(\bE)^\top \brk3{\bI_p \otimes \frac{\mathbf 1_n \mathbf 1_n^\top}{n}} \vect(\bE).
\end{equation*}
Now apply \cref{lemma:modified-hanson-wright} giving
\begin{equation}
    P\brk3{\frac{1}{np} \operatorname{trace}(\bE^\top \bV_{\alpha} \bV_{\alpha}^\top \bE) \geq \frac{C \sigma_p^2}{np}\brk1{\log(np) + p} } \leq \frac{1}{np} \label{eqn:hanson-wright-bound}.
\end{equation}

Since $G$ is connected, $\bF$ has rank $n-1$ (Theorem 9.6 of \cite{deoGraphTheoryApplications1974}). Hence, $\bZ$ has full column rank, and there exists a pseudoinverse such that $\bZ^\dagger \bZ = \bI_{n-1}$. Insert this expression into ($II$) then apply the triangle and Cauchy-Schwarz inequalities
\begin{align*}
    |\operatorname{trace}(\bE^\top \bV_\beta ( \hat \bB - \bB))| & = |\operatorname{trace}(\bE^\top \bV_\beta \bZ^\dagger \bZ (\hat \bB - \bB))|, \\
    & = \abs3{\operatorname{trace}\brk3{\bE^\top \bV_\beta\brk3{\sum_{(j, k) \in\mathcal E} \bZ^\dagger_{\cdot \mathcal E(j, k)} \bZ_{\mathcal E(j, k)\cdot}}(\hat \bB - \bB)}}, \\
    & = \abs3{\sum_{(j, k) \in\mathcal E} \operatorname{trace}\brk1{\bE^\top \bV_\beta \bZ^\dagger_{\cdot \mathcal E(j, k)} \bZ_{\mathcal E(j, k)\cdot}(\hat \bB - \bB)}}, \\
    & \leq \sum_{(j, k) \in\mathcal E} \abs{ \operatorname{trace}\brk{\bE^\top \bV_\beta \bZ^\dagger_{\cdot \mathcal E(j, k)} \bZ_{\mathcal E(j, k)\cdot}(\hat \bB - \bB)}}, \\
    & \leq \sum_{(j, k) \in\mathcal E}  \|\bE^\top \bV_\beta \bZ^\dagger_{\cdot \mathcal E(j, k)}\|_2 \| \bZ_{\mathcal E(j, k)\cdot}(\hat \bB - \bB)\|_2, \\
    & \leq \max_{(j, k) \in \mathcal E} \|\bE^\top \bV_\beta \bZ^\dagger_{\cdot \mathcal E(j, k)}\|_2 \sum_{(j, k) \in\mathcal E} \|   \bZ_{\mathcal E(j, k)\cdot}(\hat \bB - \bB)\|_2.
\end{align*}
By definition, $(\bF^\top)^\dagger = \bV_{\beta}\bZ^\dagger$. Let $\be_{\mathcal E(j, k)} \in \mathbb R^{|\mathcal E|}$ have one in the $\mathcal E(j, k)$th entry, and zero in the others. Then,
\begin{equation*}
    \bE^\top \bV_\beta \bZ^\dagger_{\cdot \mathcal E(j, k)} = \bE^\top \bV_\beta \bZ^\dagger \be_{\mathcal E(j, k)} = \bE^\top (\bF^\top)^\dagger \be_{\mathcal E(j, k)} = (\be_{\mathcal E(j, k)}^\top \bF^\dagger \bE)^\top.
\end{equation*}
This implies if $\gamma' \geq \frac{1}{np} \max_{(j, k) \in \mathcal E} \|[\bF^\dagger \bE]_{\mathcal E(j,k)\cdot}\|_{2}$ then $\gamma' \geq \frac{1}{np} \max_{(j, k) \in \mathcal E} \|\bE^\top \bV_\beta \bZ_{\cdot \mathcal E(j, k)}^\dagger\|_2 $ and
\begin{equation}
    \frac{1}{np}\operatorname{trace}(\bE \bV_{\beta} (\hat \bB - \bB)) \leq \gamma' \sum_{(j, k) \in\mathcal E} \|   \bZ_{\mathcal E(j, k)\cdot}(\hat \bB - \bB)\|_2 .\label{eqn:prob-bound-2}
\end{equation}
The above bound also works with $\gamma' \geq \sqrt{p} \|\bF^\dagger \bE\|_{\max}$ since $[\bF^\dagger \bE]_{\mathcal E(j,k)\cdot}$ has $p$ indices.

Combining the left-hand side of \cref{eqn:inequality_rewritten}, the bounds \cref{eqn:inequality_simplified}, \cref{eqn:hanson-wright-bound} and \cref{eqn:prob-bound-2}, we have that 
    \begin{equation*}        
    \begin{split}
        \frac{1}{2np} \|\bU - \hat \bU\|_F^2 + \gamma' \sum_{(i, j) \in \mathcal E} \| \bZ_{\mathcal E(i, j) \cdot} \hat \bB \|_2 & \leq C\sigma_p^2 \brk3{\frac{1}{n} + {\frac{\log(np)}{n p}}}  + \gamma' \sum_{(i, j) \in \mathcal E} \|\bZ_{\mathcal E(i, j) \cdot} (\hat \bB - \bB)\|_2 \\
        & \quad + \gamma' \sum_{(i, j) \in \mathcal E} \|\bZ_{\mathcal E(i, j) \cdot} \bB\|_2,
        \end{split}
    \end{equation*}
    with probability at least $1 - \frac{1}{np}$. Subtract $ \gamma' \sum_{(i, j) \in \mathcal E} \| \bZ_{\mathcal E(i, j) \cdot} \hat \bB \|_2$ from the left-hand side above and apply the triangle inequality to obtain
    \begin{displaymath}
        \frac{1}{2np} \|\bU - \hat \bU\|_F^2 \leq C\sigma_p^2\brk3{\frac{1}{n} + {\frac{\log(np)}{n p}}} + 2\gamma' \sum_{(i, j) \in \mathcal E} \|\bZ_{\mathcal E(i, j) \cdot} \bB \|_2. 
    \end{displaymath}
    Then substitute the definitions of $\gamma', \bB$ and $\bZ$,
    \begin{displaymath}
        \frac{1}{2np} \|\bU - \hat \bU\|_F^2 \leq C\sigma_p^2 \brk3{\frac{1}{n} + {\frac{\log(np)}{n p}}} + \frac{2 \gamma}{np} \sum_{(i, j) \in \mathcal E} \sqrt{\Phi_{ij}} \|\bU_{i\cdot} - \bU_{j\cdot} \|_2. 
    \end{displaymath}
\end{proof}

For completeness, we quickly prove \cref{eqn:fully-connected-bound}, the bound of \cref{thm:data-dependent-standard} for complete unweighted graphs. \cite{tanStatisticalPropertiesConvex2015} showed that $\bF^\dagger = \frac{1}{n}\bF^\top$, implying 
$\max_{(j, k) \in \mathcal E} \|\bF^\dagger_{\mathcal E(j, k)\cdot}\|_2 = \frac{\sqrt 2}{n}$. Applying \cref{corollary:independent-F-dagger-E-max} we have
\begin{displaymath}
    P\brk3{\|\bF^\dagger \bE\|_{\max} \geq \frac{4\sigma_p}{n}\sqrt{\log(np)}} \leq \frac{2}{np}.
\end{displaymath}
Then, \cref{eqn:fully-connected-bound} follows from 
\begin{displaymath}
    \sum_{(i, j) \in \mathcal E} \sqrt{\Phi_{ij}} \|\bU_{i\cdot} - \bU_{j\cdot} \|_2 \leq n^2 \diam(\mathcal C).
\end{displaymath}

\subsection{Proof of Lemma \ref{lemma:crude-bound}} 

We first show a corollary that will be useful for several proofs.
\begin{corollary}\label{corollary:independent-F-dagger-E-max}
    Let $\bE\in \mathbb R^{n \times p}$ where rows $\bE_{i\cdot}$ are independent, mean zero, and have sub-Gaussian norm $\sigma_p$. Let $\bF$ be a possibly random, but almost surely connected, oriented edge-incidence matrix as defined in \cref{def:affinity-graph}. Furthermore, let $\bE$ and $\bF$ be independent. Then
    \begin{displaymath}
         P\brk3{\|\bF^\dagger \bE\|_{\max} \geq \sqrt{8 \sigma^2_p\log(np) \max_{(j,k) \in \mathcal E} \Phi_{jk} \|\bF^\dagger_{\mathcal E(j, k)\cdot}\|_2^2} } \leq \frac{2}{np}.
    \end{displaymath}
\end{corollary}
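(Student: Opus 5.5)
The plan is to condition on $\bF$, bound each entry of $\bF^\dagger \bE$ by a sub-Gaussian tail inequality, and then take a union bound over all $|\mathcal E|\,p$ entries. Because $\bF$ is independent of $\bE$, we can work conditionally on $\bF$, treat $\bF^\dagger$ as deterministic, and remove the conditioning at the end by integrating over $\bF$. Each entry has the form
\begin{displaymath}
    [\bF^\dagger \bE]_{\mathcal E(j,k), \ell} = \sum_{i=1}^n \bF^\dagger_{\mathcal E(j,k), i}\, E_{i\ell} = \bv^\top \vect(\bE),
\end{displaymath}
where $\bv$ places the row $\bF^\dagger_{\mathcal E(j,k)\cdot}$ in the coordinates of column $\ell$ and zeros elsewhere. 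This is a linear combination of the independent, mean-zero variables $E_{1\ell},\ldots,E_{n\ell}$. Each $E_{i\ell} = \be_\ell^\top \bE_{i\cdot}$ has $\psi_2$-norm at most $\sigma_p$ by \cref{assumption:data-generation}. By the same argument as in the proof of \cref{lemma:modified-hanson-wright} (Proposition 2.7.1 of \cite{vershyninHighDimensionalProbabilityIntroduction2026}), the sum is sub-Gaussian with variance proxy proportional to $\sigma_p^2 \|\bF^\dagger_{\mathcal E(j,k)\cdot}\|_2^2$.

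For the tail, I will aim for a Hoeffding-type bound of the form
\begin{displaymath}
    P\brk1{|[\bF^\dagger \bE]_{\mathcal E(j,k),\ell}| \geq t \mid \bF} \leq 2\exp\brk3{-\frac{t^2}{2\sigma_p^2 s_{jk}^2}},
\end{displaymath}
where $s_{jk}^2$ denotes the row-dependent scale appearing in the statement, namely $\Phi_{jk}\|\bF^\dagger_{\mathcal E(j,k)\cdot}\|_2^2$. For an unweighted graph $\Phi_{jk}=1$, so this is just the row norm. Replacing $s_{jk}^2$ by its maximum $M = \max_{(j,k)\in\mathcal E} s_{jk}^2$ and setting $t^2 = 8\sigma_p^2 \log(np) M$ makes each tail at most $2(np)^{-4}$. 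The union bound is over at most $|\mathcal E| p \leq n^2 p/2$ entries, so the total probability is at most $n^2 p (np)^{-4} \leq 2/(np)$. Since this bound does not depend on $\bF$, taking expectation over $\bF$ gives the unconditional statement. The large slack in the exponent ($4$ versus the needed $2$) is useful because it absorbs constant-factor discrepancies between the $\psi_2$ definition and the MGF bound.

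The main obstacle is matching constants exactly. The $\psi_2$ norm in \cref{assumption:data-generation} is defined by the Orlicz condition, whereas the clean tail above requires the MGF bound $\mathbb E\exp(\lambda X)\le \exp(\lambda^2\sigma_p^2/2)$. Converting between the two introduces an absolute constant. My first move is to spend the slack in the exponent to absorb that constant; if this is not enough, I will accept a universal constant in place of $8$. A second point to check is the $\Phi_{jk}$ factor. For weighted graphs I will verify it against the row-scaling convention of $\bF$ in \cref{lemma:F-dagger-form}, where $\bF^\dagger_{\mathcal E(j,k)\cdot}$ carries a factor $\sqrt{\Phi_{jk}}$. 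In the unweighted case relevant to \cref{lemma:crude-bound} the factor equals one, so the argument goes through unchanged.
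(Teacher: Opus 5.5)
Your route is the paper's: condition on $\bF$, make each entry $[\bF^\dagger\bE]_{\mathcal E(j,k),\ell}$ sub-Gaussian with scale set by the row $\bF^\dagger_{\mathcal E(j,k)\cdot}$, union-bound over the at most $|\mathcal E|p$ entries, and integrate out $\bF$. Your direct count $n^2p\,(np)^{-4}\le 2/(np)$ also avoids the paper's use of $|\mathcal E|\ge n-1$. However, the two points you deferred do not close the way you expect, and the paper's proof has the same two defects.

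\emph{The $\Phi_{jk}$ factor.} You correctly give the scale of the entry as $\sigma_p\|\bF^\dagger_{\mathcal E(j,k)\cdot}\|_2$. By \cref{lemma:F-dagger-form} this already equals $\sigma_p\sqrt{\Phi_{jk}}\,\|\bL^\dagger_{\cdot j}-\bL^\dagger_{\cdot k}\|_2$. The statement's $\Phi_{jk}\|\bF^\dagger_{\mathcal E(j,k)\cdot}\|_2^2$ therefore counts the weight twice, so your tail bound with $s_{jk}^2=\Phi_{jk}\|\bF^\dagger_{\mathcal E(j,k)\cdot}\|_2^2$ is unjustified whenever $\Phi_{jk}<1$. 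In fact the displayed inequality is false for general weighted graphs. Replacing $\bPhi$ by $c\bPhi$ multiplies $\bF^\dagger$ by $c^{-1/2}$, which leaves the threshold unchanged while $\|\bF^\dagger\bE\|_{\max}$ grows like $c^{-1/2}$. For Gaussian noise the probability then tends to $1$ as $c\to 0$. The claim does hold when every $\Phi_{jk}\ge 1$, in particular for $\{0,1\}$ weights, which covers every place the corollary is used. It also holds with $\|\bL^\dagger_{\cdot j}-\bL^\dagger_{\cdot k}\|_2$ in place of $\|\bF^\dagger_{\mathcal E(j,k)\cdot}\|_2$, which is how the paper actually applies it in the proof of \cref{corollary:oracle-graph}. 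The paper's proof makes the identical slip when it asserts the $\psi_2$ bound $\sigma_p\sqrt{\Phi_{jk}}\|\bF^\dagger_{\mathcal E(j,k)\cdot}\|_2$.

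\emph{Constants.} Your slack count is off. With up to $n^2p/2$ entries and target $2/(np)$, each entry needs a tail of order $(np)^{-3}$, not $(np)^{-2}$.

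Suppose $\mathbb E e^{\lambda X}\le e^{c\lambda^2\sigma_p^2}$ for mean-zero $X$ with $\|X\|_{\psi_2}\le\sigma_p$. Multiplying MGFs over the independent $E_{i\ell}$ gives a per-entry tail of $2(np)^{-2/c}$ at your threshold, so you need $c\le 2/3$; your assumed tail corresponds to $c=1/2$.

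The textbook conversion from the Orlicz definition gives only $c=1$, hence $2(np)^{-2}$ per entry, which is not enough. Routing through Proposition 2.7.1 is worse, since it adds an unspecified absolute constant, so work with MGFs directly. A sharper version of the standard argument works: use $e^x\le x+e^{0.6x^2}$ and split at $0.6\lambda^2\sigma_p^2=1$. This gives $c=\tfrac14+0.6\log 2<\tfrac23$ and recovers the constant $8$ for unweighted graphs. Otherwise you must settle for a universal constant.

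The paper is no more careful on this point. It drops the constant of Proposition 2.7.1 and treats $\sigma_p$ as an MGF variance proxy in its ``well-known corollary'' of Proposition 2.7.6.
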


\begin{proof}
    Each entry of $\bE_{\cdot l}$ is independent and has sub-Gaussian norm bounded by $\sigma_p$, and this remains the case for $\bE \mid \bF$. By conditioning on $\bF$, we may apply Proposition 2.7.1 of \cite{vershyninHighDimensionalProbabilityIntroduction2026} and the form described in \cref{lemma:F-dagger-form}, so $\bF^\dagger \bE_{\mathcal E(j, k), \ell} \mid \bF$ has sub-Gaussian norm bounded by $\sigma_p \sqrt{\Phi_{jk}} \|\bF^\dagger_{\mathcal E(j, k)\cdot}\|_2$. In addition, $\bE$ is mean zero, so $\mathbb E[\bF^\dagger \bE \mid \bF] = 0$. Hence, via a well-known corollary of Proposition 2.7.6 of \cite{vershyninHighDimensionalProbabilityIntroduction2026}
    \begin{displaymath}
        P\brk3{\|\bF^\dagger \bE\|_{\max} \geq \sqrt{2\sigma_p^2 \max_{(j, k)\in \mathcal E} \Phi_{jk} \|\bF^\dagger_{\mathcal E(j, k)\cdot}\|_2^2 (\log(|\mathcal E| p) + t)} \mid \bF} \leq \exp(-t). 
    \end{displaymath}
    Choose $t = \log(|\mathcal E| p)$ implying
    \begin{displaymath}
        P\brk3{\|\bF^\dagger \bE\|_{\max} \geq \sqrt{4 \sigma^2_p\log(|\mathcal E|p) \max_{(j,k) \in \mathcal E} \Phi_{jk}\|\bF^\dagger_{\mathcal E(j, k)\cdot}\|_2^2} \mid \bF} \leq \frac{1}{|\mathcal E|p}.
    \end{displaymath}

    Next, via the tower property,
    \begin{equation}
    \begin{split}
        & P\brk3{\|\bF^\dagger \bE\|_{\max} \geq \sqrt{4 \sigma^2_p\log(|\mathcal E|p)  \max_{(j,k) \in \mathcal E} \Phi_{jk} \|\bF^\dagger_{\mathcal E(j, k)\cdot}\|_2^2} } \\
        & \quad = \mathbb E\brk[s]3{P\brk3{\|\bF^\dagger \bE\|_{\max} \geq \sqrt{4 \sigma^2_p\log(|\mathcal E|p) \max_{(j,k) \in \mathcal E} \Phi_{jk} \|\bF^\dagger_{\mathcal E(j, k)\cdot}\|_2^2} \mid \bF}} \leq \mathbb E\brk[s]3{\frac{1}{|\mathcal E|p}}. \label{eqn:apply-tower-property}
    \end{split}
    \end{equation}
    Under connectedness almost surely, $ n - 1 \leq |\mathcal E| \leq n^2 / 2$. Hence, $2 \log(np) \geq \log(|\mathcal E| p)$ and $\frac{1}{|\mathcal E| p} \leq \frac{2}{np}$ almost surely. Apply this to \cref{eqn:apply-tower-property} to complete the proof.
\end{proof}

Next, we bound the entries of $\bF^\dagger$.

\begin{lemma}\label{lemma:F-dagger-max-row}
    Let $\bF$ be the oriented edge-incidence matrix for any unweighted connected graph $G = (V, \mathcal E)$. Then
    \begin{displaymath}
        \|\bF^\dagger\|_{\max} \leq 1.
    \end{displaymath}
\end{lemma}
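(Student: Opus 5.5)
The plan is to read each row of $\bF^\dagger$ as an electrical potential and combine a maximum principle with a bound on effective resistance. By \cref{lemma:F-dagger-form}, with $\Phi_{jk}=1$ for every edge of an unweighted graph, the entries of the row indexed by an edge $(j,k)\in\mathcal E$ are
\begin{displaymath}
    \bF^\dagger_{\mathcal E(j,k), i} = \be_i^\top \bL^\dagger(\be_j - \be_k) =: x_i ,
\end{displaymath}
using the symmetry of $\bL^\dagger$. So it suffices to show $|x_i|\le 1$ for every $i$, where $\bx = \bL^\dagger(\be_j-\be_k)$.

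First, record two facts about $\bx$. Since $G$ is connected, $\be_j-\be_k$ lies in the range of $\bL$ (it is orthogonal to $\mathbf 1_n$, cf.\ \cref{lemma:incidence-null-space}). Hence $\bL\bx = \be_j - \be_k$. Moreover $\mathbf 1_n^\top \bx = 0$, because $\bL^\dagger$ maps into the orthogonal complement of $\mathbf 1_n$. Physically, $\bx$ is the potential induced by injecting unit current at $j$ and extracting it at $k$, grounded so that the potentials have mean zero.

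Second, apply a maximum principle. For $i\notin\{j,k\}$ we have $(\bL\bx)_i=0$, so $D_{ii}x_i=\sum_{m\sim i}x_m$ and $x_i$ is an average of its neighbours' values. A standard connectivity argument then shows that the maximum and minimum of $\bx$ are attained on $\{j,k\}$. The maximum cannot occur only at $k$: $(\bL\bx)_k=-1<0$ forces some neighbour of $k$ to be strictly below $x_k$, but if the maximum sat at $k$ alone, then $x_k$ being the maximum would give $(\bL\bx)_k\ge 0$. Therefore $\max_i x_i = x_j$ and, by the symmetric argument, $\min_i x_i = x_k$. Because $\bx$ has mean zero, $x_k\le 0\le x_j$, and so for every $i$
\begin{displaymath}
    |x_i| \le x_j - x_k = (\be_j-\be_k)^\top \bL^\dagger (\be_j-\be_k) = R_{jk},
\end{displaymath}
where $R_{jk}$ is the effective resistance between $j$ and $k$.

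Finally, show $R_{jk}\le 1$ whenever $(j,k)$ is an edge of unit weight. I would use Thomson's principle: $R_{jk}$ equals the minimal energy $\sum_{e}f_e^2$ over unit flows from $j$ to $k$. Routing the entire unit flow along the single edge $(j,k)$ gives energy $1$, so $R_{jk}\le 1$. Equivalently, by Rayleigh monotonicity, deleting every other edge can only increase resistance, leaving a single edge of resistance $1$. Combining the three steps gives $\|\bF^\dagger\|_{\max}\le 1$.

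I expect the main obstacle to be stating the maximum-principle step cleanly: showing that the extremes sit exactly at $j$ (maximum) and $k$ (minimum), rather than merely somewhere in $\{j,k\}$, and handling ties. If that becomes fiddly, an alternative is to write $\bx$ through hitting probabilities. Setting $h_i$ to be the probability that the random walk started at $i$ hits $j$ before $k$, one has $x_i = x_k + R_{jk}h_i$ with $h_i\in[0,1]$, which gives the same conclusion directly.
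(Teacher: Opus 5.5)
Your proposal is correct and follows essentially the paper's own route: write the row as $x=\bL^\dagger(\be_j-\be_k)$, use harmonicity off $\{j,k\}$ and connectivity to place the extremes of $x$ on $\{j,k\}$, combine $\mathbf 1_n^\top x=0$ with $x_j-x_k=R_{jk}$, and bound $R_{jk}\le 1$ (you via Thomson's principle, the paper by citing that effective resistance is at most path length). The step you flagged as the main obstacle is unnecessary, since $R_{jk}\ge 0$ already gives $\max_i x_i-\min_i x_i\le |x_j-x_k|=R_{jk}$ once both extremes lie in $\{j,k\}$; also, in that step $(\bL x)_k=-1$ forces a neighbour strictly \emph{above} $x_k$, not below, though your second clause (a maximum at $k$ would give $(\bL x)_k\ge 0$) is the correct contradiction.
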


\begin{proof}
    Via \cref{lemma:F-dagger-form} we have that $\bF^\dagger_{\mathcal E(j, k)\cdot} = \bL^\dagger_{\cdot j} - \bL^\dagger_{\cdot k} = \bL^\dagger(\be_j - \be_k)$. Hence, $\bL \bF^\dagger_{\mathcal E(j, k)\cdot} = \be_j - \be_k$. This gives for $i\notin \{j, k\}$,
    \begin{equation}
        0 = [\bL \bF^\dagger_{\mathcal E(j, k)\cdot}]_{i} = D_{ii} \bF^\dagger_{\mathcal E(j, k)i} - \sum_{\ell\colon \Phi_{\ell i} \neq 0} \bF^\dagger_{\mathcal E(j, k)\ell} \implies \bF^\dagger_{\mathcal E(j, k)i} = \frac{1}{D_{ii}}  \sum_{\ell\colon \Phi_{\ell i} \neq 0} \bF^\dagger_{\mathcal E(j, k)\ell}. \label{eqn:laplacian-on-F-dagger}
    \end{equation}
    Now assume the maximum of $\bF^\dagger_{\mathcal E(j, k)\cdot}$ occurs at index $i \notin \{j, k\}$. Then, from \cref{eqn:laplacian-on-F-dagger}
    \begin{displaymath}
        \bF^\dagger_{\mathcal E(j, k)i} = \frac{1}{D_{ii}}  \sum_{\ell\colon \Phi_{\ell i} \neq 0} \bF^\dagger_{\mathcal E(j, k)\ell} \leq \frac{1}{D_{ii}}  \sum_{\ell\colon \Phi_{\ell i} \neq 0} \bF^\dagger_{\mathcal E(j, k)i} = \bF^\dagger_{\mathcal E(j, k)i}.
    \end{displaymath}
    The assumption implies $\bF^\dagger_{\mathcal E(j, k) \ell} = \bF^\dagger_{\mathcal E(j, k)i}$ if $\ell$ is a neighbor of vertex $i$. Because $G$ is connected, this would imply $\bF^\dagger_{\mathcal E(j, k) \cdot}$ is a vector of constants. However, \cref{lemma:F-dagger-form} implies $\bF^\dagger_{\mathcal E(j, k) \cdot}$ sums to zero, which would then require all entries of $\bF^\dagger$ to be zero, a contradiction. Hence, the maximum entry of $\bF^\dagger_{\mathcal E(j, k) \cdot}$ must be at index $j$ or $k$. A similar argument shows the minimum must also occur at $j$ or $k$.
    
    The effective resistance (see \cite{ellensEffectiveGraphResistance2011} and \cite{vanmieghemPseudoinverseLaplacianBest2017} for details) between nodes $j$ and $k$ is defined as 
    \begin{displaymath}
        0 \leq R_{jk} = (\be_{j} - \be_k)^\top \bL^\dagger (\be_j - \be_k) = \bF^\dagger_{\mathcal E(j, k)j} - \bF^\dagger_{\mathcal E(j, k)k}.
    \end{displaymath}
    Theorem 2.4 of \cite{ellensEffectiveGraphResistance2011} has that $R_{jk}$ is less than or equal to the size of the path between the two nodes. However, vertices $j$ and $k$ have an edge between them, so $R_{jk} = \bF^\dagger_{\mathcal E(j, k)j} - \bF^\dagger_{\mathcal E(j, k)k} \leq 1$. These are the maximum and minimum elements of $\bF^\dagger_{\mathcal E(j, k)\cdot}$ so $|\bF^\dagger_{\mathcal E(j, k)i} - \bF^\dagger_{\mathcal E(j, k)\ell}| \leq 1$ for all pairs of nodes. Now assume $\bF^\dagger_{\mathcal E(j, k)i} > 1$ for some index $i$. Since, $\bF^\dagger_{\mathcal E(j, k)\cdot}$ sums to 0, then $\bF^\dagger_{\mathcal E(j, k)i} = -\sum_{\ell \neq i} \bF^\dagger_{\mathcal E(j, k)i}$. However, this implies negative numbers in $\bF^\dagger_{\mathcal E(j, k)\cdot}$ and some $\ell$ where $|\bF^\dagger_{\mathcal E(j, k)i} - \bF^\dagger_{\mathcal E(j, k)\ell}| > 1$. This contradiction can be repeated in the other direction, showing $|\bF^\dagger_{\mathcal E(j, k)i}| \leq 1$ for all $i$ and all $j, k$ sharing an edge.
\end{proof}

Finally, the proof of \cref{lemma:crude-bound} follows immediately from \cref{corollary:independent-F-dagger-E-max}, \cref{lemma:F-dagger-max-row} and the definition of $\mathcal I$.

\subsection{Proof of Lemma \ref{lemma:F-dagger-form}}

\begin{proof}
    By construction, $\bF^\dagger = \bF^\top (\bF \bF^\top)^\dagger = \bF^\top \bL^\dagger$. Recall the dimension of $\bF^\dagger$ is $|\mathcal E| \times n$, so each entry corresponds to an edge and a node. Let $\mathcal E(j, k)$ be the index of the edge between nodes $j$ and $k$ such that $\bF_{j, \mathcal E(j, k)} = -\bF_{k, \mathcal E(j, k)} = \sqrt{\Phi_{jk}}$. Each column has only two nonzero elements, hence
    \begin{displaymath}
        \bF^\dagger_{\mathcal E(j, k), i} = \operatorname{row}^\top_{\mathcal E(j, k)}(\bF^\top)\operatorname{col}_{i}(\bL^\dagger) = \sum_{\ell=1}^n \bF_{\ell, \mathcal E(j, k) } \bL^\dagger_{i\ell} = \sqrt{\Phi_{jk}}(L_{ij}^\dagger - L_{ik}^\dagger).
    \end{displaymath}
    Equation (B2) of \cite{vanmieghemPseudoinverseLaplacianBest2017} describes elements of the Laplacian pseudoinverse in terms of commute times,
    \begin{displaymath}
        L_{ij}^\dagger = \frac{1}{2\vol(G)}\brk3{- C_{ij} + \frac{1}{n}\sum_{\ell=1}^n C_{i\ell} + C_{j\ell}} - R(G),
    \end{displaymath}
    where $R(G)$ is a constant depending on $G$. Standard algebra completes the proof statement.
\end{proof}

\subsection{Proof of Lemma \ref{lemma:hitting-time-bound}} \label{appendix:lemma:hitting-time-bound}

\begin{proof}
    Via \cref{lemma:F-dagger-form} each entry of $\bF^\dagger$ can be written as
    \begin{displaymath}
        \bF^\dagger_{\mathcal E(j, k), i} = \frac{\sqrt{\Phi_{jk}}}{2\vol(G)} \brk{C_{ik} - C_{ij} + \bar \bC_{\cdot j} - \bar \bC_{\cdot k}}.
    \end{displaymath}
    Let $\eta_{\ell m} = H_{\ell m} - \frac{\vol(G)}{D_{mm}}$ for $\ell \neq m$ and $|\eta_{\max}| = \max_{\ell \neq m} |\eta_{\ell m}|$. Recall $H_{ii} = 0$ for all $i$. We split the calculation up into two parts. First, the mean terms
    \begin{equation}
        \bar \bC_{\cdot j} = \bar \bH_{\cdot j} + \bar \bH_{j \cdot} = \frac{1}{n} \sum_{\ell \neq j} \frac{\vol(G)}{D_{jj}} + \frac{\vol(G)}{D_{\ell\ell}} + \eta_{\ell j} + \eta_{j \ell}, \label{eqn:commute-mean-diff}
    \end{equation}
    which combine to give
    \begin{displaymath}
        \bar \bC_{\cdot j} - \bar \bC_{\cdot k} = \frac{n-2}{n} \brk3{ \frac{\vol(G)}{D_{jj}} - \frac{\vol(G)}{D_{kk}} } + \frac{1}{n}\sum_{\ell \notin \{j, k\}} \eta_{\ell j} + \eta_{j\ell} - \eta_{\ell k} - \eta_{k \ell}.
    \end{displaymath}
    Next, we have
    \begin{align}
        H_{ik} + H_{ki} - H_{ij} - H_{ji} & = \frac{\vol(G)}{D_{kk}} - \frac{\vol(G)}{D_{jj}} + \eta_{ik} + \eta_{ki} - \eta_{ij} - \eta_{ji}, \quad (i \notin \{j, k\}); \label{eqn:hitting-time-diff} \\
        H_{ik} + H_{ki} - H_{ij} - H_{ji} & = \frac{\vol(G)}{D_{kk}} + \frac{\vol(G)}{D_{jj}} + \eta_{ik} + \eta_{ki}, \quad (i = j) \label{eqn:hitting-time-diff-i-eqaul-j}.
    \end{align}
    Hence, for $i \notin \{j, k\}$, by summing \cref{eqn:commute-mean-diff} and \cref{eqn:hitting-time-diff}, then applying the triangle inequality to all $\eta$ terms
    \begin{align*}
        \bF^\dagger_{\mathcal E(j, k), i} & = \frac{\sqrt{\Phi_{jk}}}{2\vol(G)}\brk3{\frac{2\vol(G)}{n} \brk3{ \frac{1}{D_{kk}} - \frac{1}{D_{jj}} } + \frac{1}{n}\sum_{\ell \notin \{i, j, k\}} \eta_{\ell j} + \eta_{j\ell} - \eta_{\ell k} - \eta_{k \ell}}, \\
        |\bF^\dagger_{\mathcal E(j, k), i}| & \leq \sqrt{\Phi_{jk}}\brk3{\frac{1}{n} \bigg|{\frac{1}{ D_{kk}} - \frac{1}{ D_{jj}}}\bigg| + 2\max_{\ell \neq m} \bigg|\frac{H_{\ell m}}{\vol(G)} - \frac{1}{D_{mm}}\bigg|}.
    \end{align*}
    Similarly for $i = j$ and summing \cref{eqn:commute-mean-diff} and \cref{eqn:hitting-time-diff-i-eqaul-j},
    \begin{align*}
        \bF^\dagger_{\mathcal E(j, k), j} & = \frac{\sqrt{\Phi_{jk}}}{2\vol(G)}\brk3{\frac{2\vol(G)}{n}\brk3{\frac{n-1}{D_{jj}} + \frac{1}{D_{kk}}} +  \eta_{jk} + \eta_{kj} + \frac{1}{n}\sum_{\ell \notin \{j, k\}} \eta_{\ell j} + \eta_{j\ell} - \eta_{\ell k} - \eta_{k \ell}}  \\
        |\bF^\dagger_{\mathcal E(j, k), j}| & \leq \sqrt{\Phi_{jk}} \brk3{\frac{1}{D_{jj}} + \frac{1}{nD_{kk}} + 2 \max_{\ell \neq m} \bigg|\frac{H_{\ell m}}{\vol(G)} - \frac{1}{D_{mm}}\bigg| }.
    \end{align*}
\end{proof}

\subsection{Proof of Corollary \ref{corollary:oracle-graph}}

\begin{proof}
    From \cref{corollary:independent-F-dagger-E-max} we can get a high probability bound with the maximum value of $\|\bL_{\cdot j}^\dagger - \bL_{\cdot k}^\dagger\|_2$ for the constructed $\bPhi$. Typically this would be non-trivial, but via \cref{lemma:F-dagger-form} we only need to calculate the hitting times for all pairs of nodes. Examining \cref{fig:oracle-graph}, via symmetry, there is a fixed number of unique hitting times for all graphs constructed with $n \geq 2k$ and $k \geq 2$. Precisely, there are nine unique hitting times which can be calculated by inverting a $9 \times 9$ matrix constructed using basic techniques. We completed these calculations using \textit{Mathematica} and verified them numerically.

    Matching the node labels in \cref{fig:oracle-graph}, there are only four possible values for $\Phi_{jk}\| \bL_{\cdot j}^\dagger - \bL_{\cdot k}^\dagger\|_2^2$ corresponding to each type of edge in the graph: $(A \leftrightarrow F)$, $(A \leftrightarrow D)$, $(D \leftrightarrow E)$, and $(C \leftrightarrow D)$. In closed form,
    \begin{align*}
        \|\bL^\dagger_{\cdot A} - \bL^\dagger_{\cdot E}\|_2^2 & = \frac{16k(n + 2) + n (n + 4)^2 + 8 k ^2 (n^2 +6n + 12)}{k^2n^2(n+4)^2}, \\
        \|\bL^\dagger_{\cdot D} - \bL^\dagger_{\cdot C}\|_2^2 & = \frac{-32k + 32k^2 + n(n + 4)^2}{4k^2(n+4)^2}, \\
        \|\bL^\dagger_{\cdot D} - \bL^\dagger_{\cdot E}\|_2^2 & = \frac{8(n^2 + 4n + 8)}{n^2(n + 4)^2}, \quad k \geq 2, \\
        \|\bL^\dagger_{\cdot A} - \bL^\dagger_{\cdot F}\|_2^2 & = \frac{8}{n^2}, \quad k \leq n/2 - 2,
    \end{align*}
    where the maximum occurs for edges that are bridges between the two complete subgraphs. Apply \cref{corollary:independent-F-dagger-E-max} for
    \begin{displaymath}
        P\brk3{\|\bF^\dagger \bE\|_{\max} \geq \frac{\sqrt{2\sigma_p^2 \log(n p) (n(n+4)^2+32k^2-32k)}}{k(n+4)}} \leq \frac{2}{n p}.
    \end{displaymath}
    Finally, the following identity for $n \geq 1, 1 \leq k \leq n/2$ simplifies the probability,
    \begin{displaymath}
        \sqrt{n} + \frac{\sqrt{32}k}{n} \geq \frac{\sqrt{n(n+4)^2 + 32k^2 - 32k}}{n+4},
    \end{displaymath}
    and, as constructed, $\sum_{(i, j) \in \mathcal E} \sqrt{\Phi_{ij}} \|\bU_{i\cdot} - \bU_{j\cdot}\|_2 = k \|\bc_1 - \bc_2\|_2$.
\end{proof}

\subsection{Proof of Theorem \ref{thm:edge-crossings-bound}}\label{appendix:thm:edge-crossings-bound}

\begin{proof}
    Let $\mathcal I = \brk[c]{(j, k) \in \mathcal E \colon \bU_{j\cdot} \neq \bU_{k\cdot}}$ be the set of between-cluster edges. Each edge between-clusters is independent and occurs with probability $p_b$, so $|\mathcal I| \sim Binom(n^2 / 2, p_b)$. Using a well-known concentration inequality for binomial random variables (e.g. Proposition 27 of \cite{luxburgHittingCommuteTimes2014}),
    \begin{displaymath}
        P\brk3{|\mathcal I| \geq (1 + \delta) \frac{n^2 p_b}{2}} \leq \exp\brk3{-\frac{\delta^2 n^2 p_b}{6}}.
    \end{displaymath}
    Hence, choose $\delta = 1/2$ and bound
    \begin{equation}
        P\brk3{\sum_{(j, k) \in \mathcal E} \sqrt{\Phi_{jk}} \|\bU_{i\cdot} - \bU_{j\cdot}\|_2 \geq \frac{3 n^2 p_b}{4} \|\bc_1 - \bc_2\|_2} \leq \exp\brk3{-\frac{n^2 p_b}{24}}. \label{eqn:cut-term-concentration}
    \end{equation}

    Next, we bound $\|\bF^\dagger_{\mathcal E(j, k)\cdot}\|_2$ with high probability. Let $|\eta_{\max}| = |\frac{H_{\ell m}}{\vol(G)} - \frac{1}{D_{mm}}|$ and $d_{\min} = \min_{i} D_{ii}$. Via \cref{lemma:hitting-time-bound} we have
    \begin{align*}
        \max_{(j, k) \in \mathcal E} \|\bF^\dagger_{\mathcal E(j, k)\cdot}\|_2 & = \max_{(j, k) \in \mathcal E} \sqrt{(\bF^\dagger_{\mathcal E(j, k), j})^2 + (\bF^\dagger_{\mathcal E(j, k), k})^2 + \sum_{i\notin \{j, k\}} (\bF^\dagger_{\mathcal E(j, k), i})^2}, \\
        & \leq \sqrt{2\brk3{\frac{2}{d_{\min}} + 2 |\eta_{\max}|}^2 + n \brk3{\frac{2}{nd_{\min}} + 2 |\eta_{\max}|}^2}, \quad (D_{ii}^{-1} \leq d_{\min}^{-1}), \\
        & \leq \sqrt{2}\brk3{\frac{2}{d_{\min}} + 2 |\eta_{\max}|} + \sqrt{n} \brk3{\frac{2}{nd_{\min}} + 2 |\eta_{\max}|}, \quad (\sqrt{a^2 + b^2} \leq |a| + |b|), \\
        & = \brk3{2\sqrt{2} + \frac{2}{\sqrt{n}}}\frac{1}{d_{\min}} + \brk{2\sqrt{n} + 2}|\eta_{\max}|, \\
        & \leq 4\brk3{\frac{1}{d_{\min}} + \sqrt{n}|\eta_{\max}|}, \quad (n \geq 2).
    \end{align*}
    Apply \cref{prop:luxberg} for
    \begin{equation}
        \max_{(j, k) \in \mathcal E} \|\bF^\dagger_{\mathcal E(j, k)\cdot}\|_2 \leq 4\brk3{\frac{1}{d_{\min}} + \frac{2\sqrt{n}}{d_{\min}^2} \brk3{\frac{1}{1-\lambda_2} + 1}}, \label{eqn:F-dagger-2-norm-max}
    \end{equation}
    where $\lambda_2$ is the second largest eigenvalue of the transition matrix $\bD^{-1} \bPhi$.

    Now, we show a concentration inequality for $d_{\min}$. Each $D_{ii} = X_i + Y_i$, where $X_i \sim Binom(n / 2 - 1, p_w)$ and $Y_i \sim Binom(n / 2, p_b)$. Let $\delta \in (0, 1)$ and do several union bounds for
    \begin{align*}
        P\brk3{d_{\min} \leq \frac{(1 - \delta) (n(p_w + p_b) -  p_w)}{2}} & \leq P\brk3{\bigcup_{i=1}^n D_{ii} \leq \frac{(1 - \delta)( n(p_w + p_b) - p_w)}{2}}, \\
        & \leq \sum_{i=1}^n P\brk3{D_{ii} \leq \frac{(1 - \delta)( n(p_w + p_b) -  p_w)}{2}}, \\
        & \leq \sum_{i=1}^n P\brk3{X_i \leq  \frac{(1 - \delta )(n-1) p_w}{2}} \\
        & \qquad + P\brk3{Y_i \leq  \frac{(1 - \delta) n p_b}{2}}. 
    \end{align*}
    Then apply Proposition 27 of \cite{luxburgHittingCommuteTimes2014} for
    \begin{displaymath}
        P\brk3{d_{\min} \leq \frac{(1 - \delta) (n(p_w + p_b) -  p_w)}{2}} \leq n \exp\brk3{-\frac{\delta^2 (n-1) p_w}{6}} + n\exp\brk3{-\frac{\delta^2 n p_b}{6}}.
    \end{displaymath}
    For $n \geq 2$, $n(p_w + p_b) - p_w > n(p_w + p_b)/2$, which combined with the above and $\delta = \frac{1}{2}$ gives
    \begin{equation}
        P\brk3{d_{\min} \leq \frac{n(p_w + p_b)}{8}} \leq n \exp\brk3{-\frac{(n-1) p_w}{24}} + n\exp\brk3{-\frac{n p_b}{24}}. \label{eqn:clean-dmin-concentration}
    \end{equation}

    Let $0 < \epsilon < 1$ and $\mathbb E[D_{ii}] = \frac{n(p_w + p_b) - p_w}{2}$ be the expected degree of all nodes in $G$. Similar to the steps in the proof for Corollary 14 of \cite{luxburgHittingCommuteTimes2014}, we calculate the following key quantities to apply Theorem 2 of \cite{chungSpectraGeneralRandom2011}:
    \begin{align*}
        \mathbb E[\bPhi] & = \begin{pmatrix}
        p_w & p_b \\ p_b & p_w
    \end{pmatrix} \otimes (\mathbf 1_{n/2} \mathbf 1_{n/2}^\top) - p_w\bI_{n}, \\
        \bA & = \mathbb E[\bD]^{-1/2} \mathbb E[\bPhi] \mathbb E[\bD]^{-1/2} = \frac{1}{\mathbb E[D_{ii}]} \mathbb E[\bPhi],
    \end{align*}
    where $\otimes$ is the Kronecker product. The unique eigenvalues of $\mathbb E[\bPhi]$ and $\bA$ are 
    \begin{align*}
        \lambda_1(\mathbb E[\bPhi]) & = \frac{n(p_w + p_b)}{2} - p_w = \mathbb E[D_{ii}] - \frac{p_w}{2}, \quad & \lambda_1(\bA) = 1 - \frac{p_w}{n(p_w + p_b) - p_w}, \\
        \lambda_2(\mathbb E[\bPhi]) & = \frac{n(p_w - p_b)}{2} - p_w = -\mathbb E[D_{ii}] + np_w - \frac{3p_w}{2}, \quad & \lambda_2(\bA) = -1 + \frac{2np_w - 3p_w}{n(p_w + p_b) - p_w}, \\
        \lambda_3(\mathbb E[\bPhi]) & = - p_w, \quad & \lambda_3(\bA) = -\frac{2p_w}{n(p_w + p_b) - p_w}.
    \end{align*}
    Clearly $\lambda_1(\bA) > \lambda_2(\bA)$ and since $n p_w \to \infty$ and $p_w = \omega(p_b)$, $\lambda_2(\bA) \to 1$ and $\lambda_3(\bA) \to 0$. As a result, because $n p_w = \omega\brk{\log(n)}$, we may apply Theorem 2 of \cite{chungSpectraGeneralRandom2011} to bound the spectral gap, so for all $\epsilon > 0$, there exists $N(\epsilon)$ such that 
    \begin{displaymath}
        |(1 - \lambda_2) - (1 - \lambda_2(\bA))| = |(1 - \lambda_2) - \frac{2np_b + p_w}{n(p_w + p_b) - p_w}| \leq 2 \sqrt{\frac{3\log(4n / \epsilon)}{\mathbb E[D_{ii}]}},
    \end{displaymath}
    with probability at least $1-\epsilon$ for all $n > N(\epsilon)$. Under the growth rates for $p_b$ and $p_w$,
    \begin{displaymath}
        \frac{p_b}{p_w + p_b} = \omega\brk3{\sqrt{\frac{\log(n)}{n(p_w + p_b)}}},
    \end{displaymath} 
    so we may let $n$ also be large enough that
    \begin{equation}
        2 \sqrt{\frac{6\log(4n / \epsilon)}{n(p_w + p_b)}} < \frac{2np_b + p_w}{2n(p_w + p_b) - 2p_w} \implies \frac{1}{1-\lambda_2} \leq \frac{n(p_w + p_b) - p_w}{np_b + p_w / 2} \leq 1 + \frac{p_w}{p_b}. \label{eqn:spectral-gap-bound}
    \end{equation}

    Finally, we combine the intermediate results. A union bound with \cref{corollary:independent-F-dagger-E-max} and \cref{eqn:cut-term-concentration} gives
    \begin{equation}
        \frac{\|\bF^\dagger \bE\|_{\max}}{n} \sum_{(i, j) \in \mathcal E} \sqrt{\Phi_{ij}} \|\bU_{i\cdot} - \bU_{j\cdot}\|_2 \leq \frac{3}{4}\sigma_p \max_{(j, k) \in \mathcal E} \|\bF^\dagger_{\mathcal E(j, k)\cdot}\|_2 \sqrt{8\log(np)}n p_b\|\bc_1 - \bc_2\|_2, \label{eqn:cut-and-error-bound}
    \end{equation}
    with probability at least $ 1 - \frac{2}{np} - \exp\brk1{-\frac{n^2 p_b}{24}}$. Continuing, apply \cref{eqn:clean-dmin-concentration} and \cref{eqn:spectral-gap-bound} to \cref{eqn:F-dagger-2-norm-max} with a union bound giving
    \begin{equation}
        \max_{(j, k) \in \mathcal E} \|\bF^\dagger_{\mathcal E(j, k)\cdot}\|_2 \leq 4\brk3{\frac{8}{n(p_w + p_b)} + \frac{64\sqrt{n}}{n^2(p_w + p_b)^2} \brk3{2 + \frac{p_w}{p_b}}}, \label{eqn:max-2-norm-concentration} \\
    \end{equation}
    with probability at least $1 - \epsilon - n \exp\brk1{-\frac{(n-1) p_w}{24}} + n\exp\brk1{-\frac{n p_b}{24}}$ for $n > N(\epsilon)$.

    Combining \cref{eqn:cut-and-error-bound} and \cref{eqn:max-2-norm-concentration} and simplifying leads to
    \begin{align*}
        \frac{\|\bF^\dagger \bE\|_{\max}}{n} \sum_{(i, j) \in \mathcal E} \sqrt{\Phi_{ij}} \|\bU_{i\cdot} - \bU_{j\cdot}\|_2 \leq C \sigma_p \sqrt{\log(np)} \|\bc_1 - \bc_2\|_2 \brk3{\frac{p_b}{p_w + p_b} + \frac{1}{\sqrt{n}(p_w + p_b)}},
    \end{align*}
    for some universal constant $C > 0$. This statement occurs with probability arbitrarily close to 1 as $n\to \infty$ since $n p_b \to \infty$, $np_w \to \infty$, and $\epsilon$ was arbitrary.
\end{proof}

\bibliographystyle{siamplain}
\bibliography{references}

\end{document}